\documentclass[twoside,11pt]{article}

\usepackage{times}
\usepackage{fullpage}
\usepackage{graphicx} 
\usepackage{subfigure}
\usepackage{tikz}

\usepackage[round]{natbib}

\usepackage{algorithm}
\usepackage[noend]{algorithmic}
\usepackage{hyperref}

\usepackage{Definitions}
\usepackage{pdfsync}
\usepackage{color}
\usepackage{xspace}

\usepackage{enumitem}
\usepackage{comment}
\usepackage[font={small}]{caption}
\DeclareCaptionType{copyrightbox}

\newtheorem{claim}{Claim}

\newcommand{\Graph}{\Gcal}
\newcommand{\Node}{\Vcal}

\newcommand{\Edge}{\Ecal}

\newcommand{\Item}{\Lcal}
\newcommand{\Ground}{\Zcal}
\newcommand{\nGround}{N}
\newcommand{\Ind}{\Ical}

\newcommand{\budget}{b}
\newcommand{\attention}{u}

\newcommand{\Greedy}{G}
\newcommand{\g}{g}
\newcommand{\Optimal}{O}

\newcommand{\cur}{c}

\newcommand{\continmax}{\textsc{ConTinEst}\xspace}
\newcommand{\budgetmax}{\textsc{BudgetMax}\xspace}
\newcommand{\gdegree}{{GreedyDegree}\xspace}
\newcommand{\rmnum}[1]{\romannumeral #1}

\begin{document}

\title{Budgeted Influence Maximization for Multiple Products}

\author{
Nan Du, Yingyu Liang, Maria Florina Balcan, Le Song\\[0.1cm]
       {College of Computing, Georgia Institute of Technology}\\[0.1cm]
       \texttt{\{dunan,yliang39\}@gatech.edu, \{ninamf,lsong\}@cc.gatech.edu}
}

\date{}

\maketitle

\begin{abstract}

The typical algorithmic problem in viral marketing aims to identify a set of influential users in a social network, who, when convinced to adopt a product, shall influence other users in the network and trigger a large cascade of adoptions. However, the host (the owner of an online social platform) often faces more constraints than a single product, endless user attentions, unlimited budget and unbounded time; in reality, multiple products need to be advertised, each user can tolerate only a small number of recommendations, influencing user has a cost and advertisers have only limited budgets, and the adoptions need to be maximized within a short time window.

Given theses myriads of user, monetary, and timing constraints, it is extremely challenging for the host to design principled and efficient viral market algorithms with provable guarantees. In this paper, we provide a novel solution by formulating the problem as a submodular maximization in a continuous-time diffusion model under an intersection of a matroid and multiple knapsack constraints. We also propose an adaptive threshold greedy algorithm which can be faster than the traditional greedy algorithm with lazy evaluation, and scalable to networks with million of nodes. Furthermore, our mathematical formulation allows us to prove that the algorithm can achieve an approximation factor of $k_a/(2+2 k)$ when $k_a$ out of the $k$ knapsack constraints are active, which also improves over previous guarantees from combinatorial optimization literature. In the case when influencing each user has uniform cost, the approximation becomes even better to a factor of $1/3$. Extensive synthetic and real world experiments demonstrate that our budgeted influence maximization algorithm achieves the-state-of-the-art in terms of both effectiveness and scalability, often beating the next best by significant margins.

\end{abstract}

\section{Introduction} \label{sec:intro}

\setlength{\abovedisplayskip}{3pt}
\setlength{\abovedisplayshortskip}{3pt}
\setlength{\belowdisplayskip}{3pt}
\setlength{\belowdisplayshortskip}{2pt}
\setlength{\jot}{2pt}

\setlength{\textfloatsep}{1ex}

Online social networks play an important role in the promotion of new products, the spread of news, and the diffusion of technological innovations. In these contexts, the influence maximization problem (or viral marketing problem) typically has the following flavor: identify a set of influential users in a social network, who, when convinced to adopt a product, shall influence other users in the network and trigger a large cascade of adoptions. This problem has been studied extensively in the literature from both the modeling and algorithmic aspects~\citep{KemKleTar03,CheWanWan2010,BorBraChaLuc12,RodSch12,DuSonZhaGom13}.

However, the host (the owner of an online social platform) often faces more constraints than a single product, endless user attentions, unlimited budget and unbounded time; in reality
\begin{itemize}[noitemsep]
  \item {\bf Timing requirement:} the advertisers expect that the influence should occur within a certain time window, and different products may have different time requirements.
  \item {\bf Multiple products:} multiple products can spread simultaneously across the same set of social entities through different diffusion channels. These products may have different characteristics, such as revenue and speed of spread.
  \item {\bf User constraint:} users of the social network, each of which can be a potential source, would like to see only a small number of advertisement. Furthermore, users may be grouped according to their geographical locations and advertisers may have a target population they want to reach.
  \item {\bf Product constraint:} seeking initial adopters has a cost the advertiser needs to pay to the host, while the advertisers of each product have a limited amount of money.
\end{itemize}

Therefore, the goal of this paper is to solve the influence maximization problem by taking these myriads of practical and important constraints into consideration.

With respect to the multi-product and timing requirements, we propose to apply product-specific continuous-time diffusion models by incorporating the timing information into the influence estimation. Many previous work on influence maximization are mostly based on static graph structures and discrete-time diffusion models~\citep{KemKleTar03,CheWanWan2010,BorBraChaLuc12}, which cannot be easily extended to handle the asynchronous temporal information we observed in real world influence propagation. Artificially discretizing the timing information introduces additional tuning parameters, and will become more complicated in the multiple-product setting. A sequence of recent works argued that modeling cascade data and information diffusion using \emph{continuous-time} models can provide significantly improved performance than their discrete-time counterparts in recovering hidden diffusion networks and predicting the timing of events~\citep{DuSonSmoYua12, DuSonWooZha13, GomBalSch11, RodLesSch13, ZhoZhaSon13, ZhoZhaSon13b}. In our paper, we will also use the continuous-time diffusion models which provide us more accurate influence predictions~\citep{DuSonZhaGom13}.

With respect to the user and product constraints, we formulate these requirements by restricting the feasible domain over which the maximization is performed. We show that the overall influence function of multiple products is a submodular function, and the restrictions correspond to the constraints over the ground set of this submodular function. A very recent paper \citep{SomKakInaKaw14} studies the influence maximization subject to one knapsack constraint, but the problem is for one product over a known bipartite graph between marketing channels and potential customers, while we consider the more general and challenging problem for multiple products over general unknown diffusion networks. The work~\citep{LenBonCas10,SunCheLiuWanetal11} also seeks to select a fixed number of memes for each user so that the overall activity in the network is maximized. However, they have addressed the user constraints but disregarded the product constraints during the initial assignment. \citep{NarNan12} studies the cross-sell phenomenon (the selling of the first product raises the chance of selling the second), and the only constraint is a money budget for all the products. No user constraints are considered and the cost of assigning to different user is uniform for each product.

Finally, the recent work~\citep{LuBonAmiLak13} also considers the allocation problem of multiple items from the host's perspective, however with a few key differences from our work. First, \citep{LuBonAmiLak13} assumes that all items spread over the same fixed network structure given in advance based on the modified discrete-time diffusion model. Yet, in real scenarios, we may have no priori knowledge about the underlying network structure, and different items can have different diffusion structures as well, so we instead learn each product-specific diffusion networks directly from the data. Second, \citep{LuBonAmiLak13} considers the aspect of competition during the diffusion process without directly addressing the user and product constraints. In contrast, we model the constraints among multiple items during the initial stage of assignment due to users' dislike about advertisements and advertisers' budgets. Thirdly, \citep{LuBonAmiLak13} focuses the experimental evaluation of the proposed heuristic method only on the synthetic data. We instead provide mathematically rigorous formulation to design efficient algorithms with provable performance guarantee and further show in real testing data that the resulting allocation can indeed induce large scale diffusion.

Therefore, the main contributions of the paper include a novel formulation of a real world problem of significant practical interest, new efficient algorithms with provable theoretical guarantees, and strong empirical results. Furthermore,
\begin{itemize}[noitemsep]
	\item Unlike prior work that considers an a-priori described simplistic discrete-time diffusion model, we first {\bf learn} the diffusion networks from data by using continuous-time diffusion models. This allows us to address the timing constraints in a principled way.
	\item We formulate the influence maximization problem with aforementioned constraints as a submodular maximization under the intersection of matroid constraints and knapsack constraints. The submodular function we use is based on the actual diffusion model learned from data with the time window constraint. This novel formulation provides us a firm theoretical foundation for designing greedy algorithms with provable approximation guarantees.

	\item We propose an efficient adaptive-threshold greedy algorithm which is linear in the number of products and proportional to $\widetilde \Ocal(|\Node|+|\Edge^*|)$ where $|\Node|$ is the number of nodes (users) and $|\Edge^*|$ is the number of edges in the largest diffusion network. We prove that this algorithm is guaranteed to find a solution with an overall influence of at least roughly $\frac{k_a}{2+2k}$ of the optimal value, when $k_a$ out of the $k$ knapsack constraints are active. This improves over the best known approximation factor achieved by polynomial time algorithms in the combinatorial optimization literature. In the case when advertising each product to different users has uniform cost, the constraints reduce to an intersection of two matroids, and we obtain an approximation factor of roughly $1/3$, which is optimal for such optimization.
	\item We evaluate our algorithm over large synthetic and real world datasets. We observe that it can be faster than the traditional greedy algorithm with lazy evaluation, and is scalable to networks with millions of nodes. In terms of maximizing overall influence of all products, our algorithm can find an allocation that indeed induces the largest diffusion in the testing data with at least $20$-percent improvement overall compared to other scalable alternatives.
\end{itemize}
In the remainder of the paper, we first formalize our problem, modeling various types of practical requirements. We then describe our algorithm and provide the theoretical analysis. Finally, we present our experimental results and conclude the paper.

\section{Problem Formulation}
We will start with our strategies to tackle various types of practical requirements, and then describe the overall problem formulation.

\vspace{-2mm}
\subsection{Timing Constraints}
\vspace{-1mm}

The advertisers expect that the influence should occur within a certain time window, and different products may have different time requirements. To address this challenge, we will employ a continuous-time diffusion model which has been shown to perform better than discrete-time diffusion models in term of estimating diffusion influence given a time window~\citep{DuSonZhaGom13}.

More specifically, given a directed graph $\Gcal = (\Vcal,\Ecal)$, we associate each edge, $e:=(j,i)$, with a transmission function, $f_{e}(\tau_e)$. The transmission function is a density over time, in contrast to previous discrete-time models where each edge is associated with a fixed infection probability~\citep{KemKleTar03}. The diffusion process begins with a set of infected source nodes, $\Rcal$, initially adopting certain \emph{contagion} (idea, meme or product) at time zero. The contagion is transmitted from the sources along their out-going edges to their direct neighbors. Each transmission through an edge entails a \emph{random} transmission time, $\tau$, drawn independently from a density over time, $f_{e}(\tau)$. Then, the infected neighbors transmit the contagion to their respective neighbors. We assume an infected node remains infected for the entire diffusion process. Thus, if a node $i$ is infected by multiple neighbors, only the neighbor that first infects node $i$ will be the \emph{true parent}. The process continues until it passes an observation window $T$ or no more infection occurs.
This continuous-time independent cascade model lays a solid foundation for us to learn and describe the asynchronous temporal information of the cascade data. Specifically, by assuming particular parametric families~\citep{GomBalSch11,DuSonWooZha13} of the density function $f_{e}(\tau_e)$ or even the more sophisticated nonparametric techniques~\citep{DuSonSmoYua12}, we can learn the diffusion network structure as well as the density function $f_{e}(\tau_e)$ by using convex programming. Moreover, the learnt pairwise density function $f_{e}(\tau_e)$ can be sufficiently flexible to describe the heterogeneous and asynchronous temporal dynamics between pairs of nodes, which can be challenging for the classic discrete-time models to capture.

Intuitively, given a time window, the wider the spread of an infection, the more influential the given set of sources. The influence function is thus defined as the expected number of infected nodes given a set of sources by time $T$~\citep{RodSch12}. Formally, given a set, $\Rcal \subseteq \Vcal$, of sources infected at time zero and a time window $T$, a node $i$ is infected if $t_i\leqslant T$. The expected number of infected nodes (or {\bf the influence}) given the set of transmission functions $\cbr{f_{e}}_{(j,i)\in\Ecal}$ are defined as
\begin{align}
  \sigma(\Rcal,T)
    = \EE\left[\sum\nolimits_{i\in \Vcal}\II\cbr{t_i\leqslant T}\right],
  \label{eq:influence}
\end{align}
where $\II\cbr{\cdot}$ is the indicator function
and the expectation is taken over the the set of \emph{dependent}
variables $\{t_i\}_{i\in\Vcal}$. By Theorem 4 in~\citep{RodSch12}, the influence function $\sigma(\Rcal,T)$ is submodular in $\Rcal$.
In general, the exact influence estimation problem is a very challenging graphical model inference problem, so \citep{DuSonZhaGom13} has proposed a highly efficient randomized algorithm, \continmax for this task. It can estimate the influence of an arbitrary set of source nodes to an accuracy of $\epsilon$ using $r=\Ocal(1/\epsilon^2)$ randomizations and $\widetilde \Ocal(r|\Ecal|+r|\Vcal|)$ computations, so we will incorporate \continmax into our model.

\vspace{-2mm}
\subsection{Multiple Item Constraints}
\vspace{-1mm}

Multiple products can spread simultaneously across the same set of social entities through different diffusion channels. These products may have different characteristics, such as revenue and speed of spread. To address this challenge, we will use multiple diffusion networks for different types of products.

Suppose we have a set $\Lcal$ of different products that propagate on the same set of nodes $\Vcal$ with different diffusion dynamics. The diffusion network for product $i$ is denoted as $\Gcal_i = (\Vcal,\Ecal_i)$.
For each product $i$, we want to assign it to a set, $\Rcal_i\subseteq \Vcal$, of users (source nodes), while at the same time taking into account various constraints on the sets of source nodes.
Given a time $T_i$, let $\sigma_i(\Rcal_i, T_i)$ denote the influence of product $i$.

\begin{figure}[!t]
  \centering
  \includegraphics[width=.3\columnwidth]{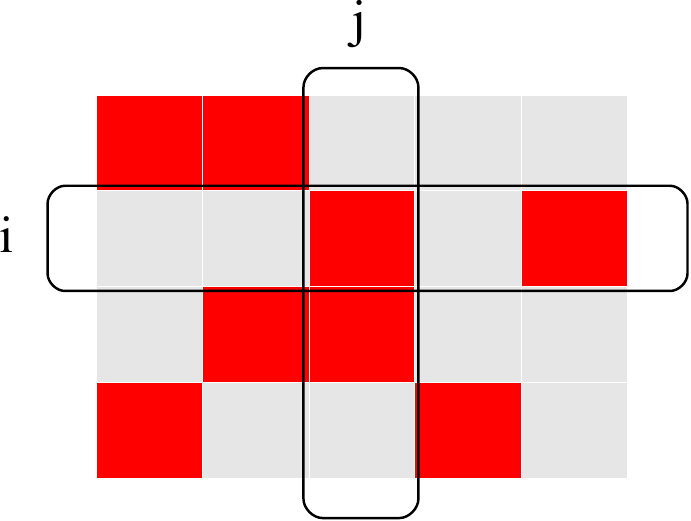}
  \caption{Illustration of the assignment matrix $A$ associated with partition matroid $\Mcal_1$ and group knapsack constraints. If product $i$ is assigned to user $j$, then $A_{ij} =1$ (colored in red).
  The ground set $\Ground$ is the set of indices of the entries in $A$, and selecting an element $(i,j) \in \Ground$ means assigning product $i$ to user $j$.
  The user constraint means that there are at most $u_j$ elements selected in the $j$-th column; the product constraint means that the total cost of the elements selected in the $i$-th row is at most $B_i$.
  } \label{fig:assignmentMatrix}
\end{figure}

The selection of $\Rcal_i$'s can be captured by an assignment matrix $A \in \{0,1\}^{|\Lcal| \times |\Vcal|}$ as follows: $A_{ij} = 1$ if $j \in \Rcal_i$ and $A_{ij} = 0$ otherwise.
Based on this observation, we define a new ground set $\Ground=\Item \times \Node$ of size $N=|\Item| \times |\Node|$. Each element of $\Zcal$ corresponds to the index $(i,j)$ of an entry in the assignment matrix $A$, and selecting element $z=(i,j)$ means assigning product $i$ to user $j$ (see Figure~\ref{fig:assignmentMatrix} for an illustration).
Then our goal is to maximize the {\bf overall influence}
\begin{align}
  f(S) := \sum_{i \in \Lcal} a_i \sigma_i(\Rcal_i, T_i)
\end{align}
subject to given constraints, where $a_i>0$ are a set of weights reflecting the different benefits of the products and $\Rcal_{i} = \{j \in \Node: (i,j) \in S\}$.
We now show that the overall influence function $f(S)$ is submodular over the ground set $\Ground$.

\begin{lemma}\label{lem:sub}
Under the continuous-time independent cascade model,
the overall influence $f(S)$ is a normalized monotone submodular function of $S$.
\end{lemma}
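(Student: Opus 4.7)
The plan is to verify the three properties—$f(\emptyset)=0$, monotonicity, and submodularity—by reducing each to the corresponding property of the single-product influence $\sigma_i$, which is already known to be submodular (Theorem 4 of \citep{RodSch12}) and is monotone and zero at the empty set by its definition in~\eqref{eq:influence}.

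First, I would set up notation: for any $S \subseteq \Ground$, write $\Rcal_i(S) = \{j \in \Node : (i,j) \in S\}$, so that $f(S) = \sum_{i \in \Item} a_i\, \sigma_i(\Rcal_i(S), T_i)$. The key structural observation, which drives the entire argument, is that the map $S \mapsto \Rcal_i(S)$ is \emph{coordinate-wise}: adding a single element $z = (i^*, j^*)$ to $S$ affects only the source set $\Rcal_{i^*}$ (by appending $j^*$) and leaves all other $\Rcal_i$ unchanged. This reduces the behavior of $f$ under single-element updates to the behavior of $\sigma_{i^*}$ under a single-source addition.

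Normalization is immediate: $\Rcal_i(\emptyset) = \emptyset$ for every $i$, and an empty source set produces no infections, so $\sigma_i(\emptyset, T_i) = 0$, giving $f(\emptyset) = 0$. For monotonicity, if $S \subseteq T$ then $\Rcal_i(S) \subseteq \Rcal_i(T)$ for every $i$; since enlarging the source set can only increase the expected number of infected nodes (a fact already built into the continuous-time cascade model, and also a direct consequence of submodularity plus $\sigma_i(\emptyset,T_i)=0$), and the weights $a_i > 0$, summing preserves the inequality $f(S) \leq f(T)$.

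For submodularity, I would take $S \subseteq T \subseteq \Ground$ and $z = (i^*, j^*) \notin T$, and compute the marginal gains. By the coordinate-wise observation, only the $i^*$-th term in the sum defining $f$ changes, so
\begin{align*}
f(S \cup \{z\}) - f(S) &= a_{i^*}\bigl[\sigma_{i^*}(\Rcal_{i^*}(S) \cup \{j^*\}, T_{i^*}) - \sigma_{i^*}(\Rcal_{i^*}(S), T_{i^*})\bigr],\\
f(T \cup \{z\}) - f(T) &= a_{i^*}\bigl[\sigma_{i^*}(\Rcal_{i^*}(T) \cup \{j^*\}, T_{i^*}) - \sigma_{i^*}(\Rcal_{i^*}(T), T_{i^*})\bigr].
\end{align*}
Since $\Rcal_{i^*}(S) \subseteq \Rcal_{i^*}(T)$, submodularity of $\sigma_{i^*}$ gives that the first bracket is at least the second, and multiplying by $a_{i^*} > 0$ preserves the direction, yielding the diminishing-returns inequality for $f$. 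I do not expect any real obstacle here: the whole argument is a bookkeeping reduction through the coordinate-wise structure of $S \mapsto (\Rcal_1(S), \ldots, \Rcal_{|\Item|}(S))$, and the only nontrivial input (submodularity of each $\sigma_i$) is imported wholesale from prior work.
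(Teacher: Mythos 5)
Your proof is correct and follows essentially the same route as the paper's: reduce everything to the known submodularity of each $\sigma_i$ (Theorem 4 of \citep{RodSch12}) and combine via the non-negative weights $a_i$. The only difference is that you explicitly spell out the coordinate-wise lift from subsets of $\Node$ to subsets of $\Ground = \Item \times \Node$ (that adding $(i^*,j^*)$ changes only $\Rcal_{i^*}$), a bookkeeping step the paper's one-line proof leaves implicit; this is a welcome clarification but not a different argument.
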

\begin{proof}
By definition, $f(\emptyset) = 0$ and $f(S)$ is monotone.
By Theorem 4 in~\citep{RodSch12}, the component influence function $\sigma_i(\Rcal_{i},T_i)$ for product $i$ is submodular in $\Rcal_{i} \subseteq \Node$.
Since non-negative linear combinations of submodular functions are still submodular, $f_i(S) := a_i\sigma_i(\Rcal_{i},T_i)$ is also submodular in $S \subseteq \Ground=\Item\times\Node$, and  $f(S) = \sum_{i\in \Item} f_i(S)$ is submodular.
\end{proof}

\vspace{-4mm}
\subsection{User Constraints}
\vspace{-1mm}

Users of the social network, each of which can be a potential source, would like to see only a small number of advertisement. Furthermore, users may be grouped according to their geographical locations and advertisers may have a target population they want to reach. To address this challenge, we will employ the matroids, a combinatorial structure that generalizes the notion of linear independence in matrices~\citep{Schrijver03,Fujishige05}.
Formulating our constrained influence maximization using matroids allow us to design a greedy algorithm with provable guarantees.

Formally, let each user $j$ can be assigned to at most $\attention_j$ products. Then
\begin{definition}
A matroid is a pair, $\Mcal=(\Ground, \Ind)$, defined over a finite set, $\Ground$ (the ground set), and
$\Ind$ contains a family of sets (the independent sets) which satisfy three axioms
\begin{enumerate}[noitemsep]
  \item {Non-emptiness:} The empty set $\emptyset \in \Ind$.
  \item {Heredity:} If $Y \in \Ind$ and $X\subseteq Y$, then $X \in \Ind$.
  \item {Exchange:} If $X \in \Ind, Y \in \Ind$ and $|Y| > |X|$, then there exists $z \in Y\setminus X$ such that $X \cup \{z\} \in \Ind$.
\end{enumerate}
\end{definition}

An important type of matroids are partition matroids in which the ground set $\Zcal$ is partitioned into disjoint subsets $\Ground_1, \Ground_2,\dots,\Ground_t$
for some $t$ and
$$\Ind=\{S~|~S\subseteq \Ground~\text{and}~|S \cap \Ground_i|\leqslant u_i, \forall i=1,\dots,t \}$$
for some given parameters $u_1,\dots, u_t$.
The user constraints can then be formulated as
\begin{itemize}
\item {Partition matroid $\Mcal_1$:} partition the ground set into $\Ground_{* j}=\Item \times\cbr{j}$ each of which corresponds to a column of  $A$. Then $\Mcal_1=\cbr{\Ground, \Ical_1}$ is
$$
	\Ical_1 = \cbr{S| S\subseteq \Ground~\text{and}~|S\cap \Ground_{* j}|\leqslant \attention_j,\forall j}.
$$
\end{itemize}

Note that matroids can model more general real world constraints that those described above,
and our formulation, algorithm, and theoretical results apply to general matroid constraints (more precisely, apply to Problem~\ref{pro:infMax}).  Our results can be used for significantly more general scenarios than the practical problem we addressed here.

For an concrete example, suppose there is a hierarchical community structure on the users, \ie, a tree $\Tcal$ whose leaves are the users and whose internal nodes are communities consisting of all users underneath, such as customers in different countries around the world. Due to policy or marketing strategies, on each community $C \in \Tcal$, there are at most $u_C$ slots for assigning the products.
Such constraints are readily modeled by the laminar matroid, which generalizes the partition matroid by allowing the subsets $\cbr{\Ground_i}$ to be a laminar family (\ie, for any $\Ground_i \neq \Ground_j$, either $\Ground_i \subseteq \Ground_j$, or $\Ground_j \subseteq \Ground_i$, or $\Ground_i \cap \Ground_j = \emptyset$). It can be verified that the community constraints can be captured by the matroid $\Mcal=(\Ground, \Ind)$ where $\Ical = \cbr{S\subseteq \Ground: |S\cap C|\leqslant u_C,\forall C \in \Tcal}$.

\vspace{-2mm}
\subsection{Product Constraints}
\vspace{-1mm}

Seeking initial adopters has a cost the advertiser needs to pay to the host, while the advertisers of each product have a limited amount of money. To address this challenge, we formulate the constraints as knapsack constraints.

Formally, let each product $i$ has a budget $B_i$ and assigning item $i$ to user $j$ costs $c_{ij} > 0$. For a set $\Item$ of products, the constraints correspond to $|\Item|$ group-knapsack constraints. To describe product constraints over the ground set $\Ground$, we introduce the following notations. For an element $z = (i,j) \in \Ground$, define its cost to be $c(z) := c_{ij}$. Abusing the notation slightly, the cost for a subset $S \subseteq \Ground$ is $c(S) := \sum_{z\in S} c(z)$. Then in a feasible solution $S \subseteq \Ground$, the cost of assigning product $i$ is $c(S \cap \Ground_{i*})$, which should not be larger than its budget $B_i$.

Without loss of generality, we can assume $B_i=1$ (by normalizing $c_{ij}$ with $B_i$),
and also $c_{ij} \in (0, 1]$ (by throwing away any element $(i,j)$ with $c_{ij} > 1$), and define
\begin{itemize}
	\item {Group-knapsack:} partition the ground set into $\Ground_{i *}=\cbr{i}\times \Node$ each of which corresponds to one row of $A$. Then a feasible solution $S \subseteq \Ground$ satisfies
	$$
		c(S \cap \Ground_{i*}) \leqslant 1, \forall i.
	$$
\end{itemize}
Note that these knapsack constraints have very specific structure: they are on different groups of a partition $\cbr{\Ground_{i *}}$ of the ground set. Furthermore, the submodular function $f(S) = \sum_i a_i\sigma_i(\Rcal_{i},T_i) $  are defined over the partition. Such structures allow us to design an efficient algorithm with improved guarantee over the known results.

\vspace{-2mm}
\subsection{Overall Problem Formulation} \label{sec:inf}
\vspace{-1mm}

Based on the above discussion of various constraints in viral marketing and our design choices for tackling the involved challenges, the influence maximization problem is a special case of the following constrained submodular maximization problem with $P=1$ matroid and $k=|\Item|$ knapsack constraints,
\begin{eqnarray}
& \text{max}_{S\subseteq \Ground} & f(S)  \label{pro:infMax}\\
& \text{subject to} &  c(S \cap \Ground_{i*}) \leqslant 1, \quad 1 \leqslant i \leqslant k, \nonumber\\
&  &  S\in \bigcap_{i=1}^P \Ical_p .\nonumber
\end{eqnarray}
For simplicity, let $\Fcal$ denote all the feasible solutions $S\subseteq \Ground$.

An important case of influence maximization, which we denote as {\bf Uniform Cost}, is that for each product $i$, different users have the same cost $c_{i*}$, \ie, $c_{ij}=c_{i*}$ for any $i$ and $j$. Equivalently, each product $i$ can be assigned to at most $b_j$ users, where $b_i:= \lfloor B_i / c_{i*} \rfloor$. Then the product constraints are simplified to
\begin{itemize}
 \item {Partition matroid $\Mcal_2$:} for the product constraints with uniform cost, define a matroid $\Mcal_2=\cbr{\Ground, \Ical_2}$ where $$\Ical_2 = \cbr{S | S\subseteq \Ground~\text{and}~|S\cap \Ground_{i*} |\leqslant \budget_i, \forall i}.$$
\end{itemize}
In this case, the influence maximization problem in Problem~\ref{pro:infMax} becomes one with $P=2$ matroid constraints and no knapsack constraints ($k=0$). It turns out that the analysis of this case (without knapsack) forms the base for that of the general case (with knapsack). In the following, we present our algorithm, and provide the analysis for the uniform cost case and then for the general case.

\section{Algorithm}\label{sec:algo}

For submodular maximization under multiple knapsack constraints,
there exist algorithms that can achieve $1-\frac{1}{e}$ approximation, but the running time is exponential in the number of knapsack constraints~\citep{KulShaTam09}. 
The matroid constraint in Problem~\ref{pro:infMax} can be replaced by $|\Node|$ knapsack constraints,
so that the problem becomes submodular maximization under $|\Item|+|\Node|$ knapsack constraints. However, this na\"ive approach is not practical for large scale scenarios due to the exponential time complexity.
For submodular maximization under $k$ knapsack constraints and $P$ matroids constraints,
the best approximation factor achieved by polynomial time algorithms is $\frac{1}{P+2 k + 1}$~\citep{BadVon13}.
This is not good enough, since in our problem $k=|\Item|$ can be large, though $P=1$ is small.
Note that Problem~\ref{pro:infMax} has very specific structure:
the knapsack constraints are over different groups $\Ground_{i*}$ of the whole ground set,
and the objective function is a sum of submodular functions over these different groups.
Here we exploit such structure to design an algorithm,
which achieves better approximation factor.

The details are described in Algorithm~\ref{alg:densityEnu}.
It enumerates different values of a so-called density threshold $\rho$, which quantifies the cost-effectiveness of assigning a particular product to a specific user.
It runs a subroutine to get a solutions for each $\rho$, 
and finally outputs the solution with maximum objective value.
Intuitively, the algorithm restricts the search space to be the set of most cost-effective allocations. 

The subroutine for a fixed density threshold is described in Algorithm~\ref{alg:greedyFixedDensity}. Inspired by the lazy evaluation heuristic, the algorithm maintains a working set $G$ and a marginal gain threshold $w_t$ geometrically decreasing by a factor of $1+\delta$, and sets the threshold to $0$ when it is sufficiently small. At each $w_t$, it selects new elements $z$ that satisfying the following:
(1) it is feasible and the density ratio (the ratio between the marginal gain and the cost) is over the current density threshold; (2) its marginal gain
$$
  f(z|\Greedy) := f(\Greedy \cup \{z\}) - f(\Greedy)
$$
is over the current marginal gain threshold.
The term ``density'' comes from the knapsack problem where the marginal gain is the mass and the cost is the volume, and large density means gaining a lot without paying much. In short, the algorithm considers only assignments with high quality, and repeatedly selects feasible ones with marginal gain from large to small.

\noindent{\bf Remark 1:} The traditional lazy evaluation heuristic also keeps a threshold but only uses the threshold to speed up selecting the element with maximum marginal gain.
Algorithm~\ref{alg:greedyFixedDensity} can add multiple elements $z$ from the ground set at each threshold,
and thus reduce the number of rounds from the size of the solution to the number of thresholds $\Ocal(\frac{1}{\delta}\log \frac{N}{\delta})$.
This allows us to tradeoff between the runtime and the approximation ratio (see our theoretical guarantees).

\noindent{\bf Remark 2:} Evaluating the objective $f$ is expensive, which involves evaluating the influence of the assigned products. We will use the randomized algorithm by~\citep{DuSonZhaGom13} to compute an estimation $\widehat f(\cdot)$ of the quantity $f(\cdot)$.

\renewcommand{\algorithmicrequire}{\textbf{Input:}}
\renewcommand{\algorithmicensure}{\textbf{Output:}}
\begin{algorithm}[!t]
\caption{Density Threshold Enumeration}
\label{alg:densityEnu}
\begin{algorithmic}[1]
\REQUIRE{parameter $\delta$; objective $f$ or its approximation $\widehat f$}
\STATE{Set $d = \max \cbr{ f(\{z\}): z \in \Ground}$.}
\FOR{ $\rho \in \cbr{\frac{2d}{P+2k+1}, (1+\delta)\frac{2d}{P+2k+1}, \dots, \frac{2|\Ground|d}{P+2k+1} }$ }
\STATE{Call Algorithm~\ref{alg:greedyFixedDensity} to get $S_\rho$.}
\ENDFOR
\ENSURE{$\argmax_{S_\rho} f(S_\rho)$.}
\end{algorithmic}
\end{algorithm}

\begin{algorithm}[!t]
\caption{Adaptive Threshold Greedy for Fixed Density}
\label{alg:greedyFixedDensity}
\begin{algorithmic}[1]
\REQUIRE{parameters $\rho$, $\delta$; objective $f$ or its approximation $\widehat f$}
\STATE{Set $d_\rho = \max \cbr{f(\cbr{z}): z \in \Ground, f(\cbr{z}) \geqslant c(z) \rho}$.\\
	Set $w_t = \frac{d_\rho}{(1+\delta)^t}$ for $t = 0,\dots, L= \argmin_i \bigl[w_i \leqslant \frac{\delta d}{\nGround}\bigr]$, and $ w_{L+1} = 0$.}
\STATE{Set $\Greedy= \emptyset$.}

\FOR{$t=0,1,\dots,L,L+1$}
    \FOR{$z \not\in \Greedy$ with $\Greedy \cup \{z\} \in \Fcal$
        and $f(z|\Greedy) \geqslant c(z) \rho$}
        \IF{$f(z|\Greedy) \geqslant w_t$}
	        \STATE{Set $\Greedy \leftarrow \Greedy \cup \{z\}$.}
        \ENDIF
    \ENDFOR
\ENDFOR

\ENSURE{$S_\rho=G$.}
\end{algorithmic}
\end{algorithm}

\section{Theoretical Guarantees} \label{sec:thm}

Our algorithm is simple and intuitive. However, it is highly non-trivial to obtain the theoretical guarantees. For clarity, we first analyze the simpler case with uniform cost, which then provides the base for analyzing the general case.

\subsection{Uniform Cost}\label{sec:uni}

As shown at the end of Section~\ref{sec:inf}, the influence maximization in this case corresponds to Problem~\ref{pro:infMax} with $P=2$ and no knapsack constraints. We can simply run Algorithm~\ref{alg:greedyFixedDensity} with $\rho = 0$ to obtain a solution $G$, which is then roughly $\frac{1}{P+1}$-approximation.

\noindent{\bf Intuition.} The algorithm greedily selects the feasible element with sufficiently large marginal gain. One might wonder whether the algorithm will select just a few elements while many elements in the optimal solution $O$
will become infeasible and will not be selected, in which case the greedy solution $G$ is a poor approximation.
Furthermore, we only use the estimation $\widehat f$ of the influence $f$ (\ie, $|\widehat f(S) - f(S)| \leqslant \epsilon$ for any $S \subseteq \Ground$),
which introduces additional error to the function value.
A crucial question that has not been addressed is whether the adaptive threshold greedy algorithm is robust to such perturbations.

It turns out that the algorithm will select sufficiently many elements of high quality.
First, the elements selected in optimal solution $O$ but not selected in $G$ can be partitioned into $|G|$ groups, each of which associates with an element in $G$,
such that the number of elements in the groups associating with the first $t$ elements in $G$ is bounded by $Pt$.
See Figure~\ref{fig:greedyPartition} for an illustration.
Second, the marginal gain of each element in $G$ is at least as large as that of any element in the group associated with it (up to some small error).
This means that even if the submodular function evaluation is inexact,
the quality of the elements in the greedy solution is still good.
The two claims together show that the marginal gain of $O\setminus G$ is not much larger than the gain of $G$, and thus $G$ is a good approximation for the problem.

Formally, suppose we use an inexact evaluation such that $|\widehat f(S) - f(S)| \leqslant \epsilon$ for any $S \subseteq \Ground$, and suppose product $i \in \Item$ spreads according to diffusion network $\Graph_i = (\Node, \Edge_i)$, and let $i^*=\argmax_{i\in\Item}|\Edge_i|$. We have

\begin{theorem}\label{thm:infMax_uni}
For influence maximization with uniform cost,  Algorithm~\ref{alg:greedyFixedDensity} (with $\rho=0$) outputs a solution $G$ with
$
	f(\Greedy) \geqslant \frac{1-2\delta}{3} f(\Optimal)
$
in expected time $\widetilde\Ocal\left(\frac{|\Edge_{i^*}|+|\Node|}{\delta^2}  + \frac{|\Item||\Node|}{\delta^3} \right).$
\end{theorem}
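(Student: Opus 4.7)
The plan is to adapt the Fisher--Nemhauser--Wolsey analysis of greedy on an intersection of $P = 2$ matroids to our setting, accounting for the two differences from the textbook algorithm: it does not exhaust each round by picking the exact marginal-gain maximizer, but only an element within a multiplicative factor $(1+\delta)$ of the current threshold, and it evaluates marginal gains through an inexact oracle $\widehat f$ with $|\widehat f(S) - f(S)| \leqslant \epsilon$. The target intermediate bound is $f(G) \geqslant \frac{1}{3 + \Ocal(\delta)} f(O) - \Ocal(\epsilon N)$, after which $\epsilon$ is chosen polynomially small in $\delta$ and $1/N$ to absorb the additive term into the claimed $\frac{1-2\delta}{3}$ ratio.

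\textbf{Partition and marginal-gain bound.} Using the classical exchange argument for an intersection of two matroids (both are partition matroids here, so the exchange is explicit), I would partition $O \setminus G$ into sets $O_1, \dots, O_{|G|}$ indexed by the order in which $g_1, \dots, g_{|G|}$ were appended to $G$, with $|O_t| \leqslant P = 2$, and with the key property that every $o \in O_t$ is independent of $G_{t-1} = \{g_1, \dots, g_{t-1}\}$ in both matroids; in particular $o$ was still a legal candidate at every rung preceding the round that added $g_t$. The key threshold inequality to establish is that, for each $o \in O_t$,
$$f(o \mid G_{t-1}) \leqslant (1+\delta)\, f(g_t \mid G_{t-1}) + \Ocal(\epsilon).$$
This follows by noting that if $g_t$ was accepted at threshold $w_j$ (so $\widehat f(g_t \mid G_{t-1}) \geqslant w_j$), then at the previous threshold $w_{j-1} = (1+\delta) w_j$ the element $o$ was feasible but not added, forcing $\widehat f(o \mid G^{(j-1)}) < w_{j-1}$, where $G^{(j-1)}$ is the working set at that earlier rung. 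Submodularity (applied via $G^{(j-1)} \subseteq G_{t-1}$) together with the oracle error bound then chains these inequalities together.

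\textbf{Summation and running time.} Ordering the elements of $O \setminus G$ compatibly with the partition and applying submodularity, $f(O) - f(G) \leqslant \sum_t \sum_{o \in O_t} f(o \mid G_{t-1}) \leqslant P(1+\delta) f(G) + \Ocal(\epsilon N)$, where the inner telescoping uses $\sum_t f(g_t \mid G_{t-1}) = f(G)$. With $P=2$ this rearranges to the desired ratio once $\epsilon$ is set of order $\delta f(O) / N$. For the running time, the schedule contains $L+1 = \Ocal(\delta^{-1} \log(N/\delta))$ rungs; at each rung every candidate $(i,j)$ is examined once, and the marginal gain is computed with \continmax on product $i$'s diffusion graph using $r = \Ocal(1/\epsilon^2)$ independent samples at total cost $\widetilde\Ocal(|\Edge_i| + |\Node|)$. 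Amortizing \continmax's sampling work across repeated queries on the same product graph collapses the total to the stated $\widetilde\Ocal\bigl((|\Edge_{i^*}| + |\Node|)/\delta^2 + |\Item||\Node|/\delta^3\bigr)$ bound.

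\textbf{Main obstacle.} The delicate piece is the marginal-gain inequality: one must combine the $(1+\delta)$ geometric gap between consecutive thresholds, the $\epsilon$-accuracy of $\widehat f$, and the fact that the working set at the rung when $o$ was last rejected is strictly contained in $G_{t-1}$ rather than equal to it. Carefully identifying the right rung for each pair $(o, g_t)$ and invoking submodularity and the oracle error in the correct order, so that the oracle errors aggregate additively to $\Ocal(\epsilon N)$ rather than compounding multiplicatively through the $(1+\delta)$ factors, is where the bulk of the bookkeeping lies.
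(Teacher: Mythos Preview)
Your overall strategy matches the paper's: both adapt the Fisher--Nemhauser--Wolsey greedy analysis for an intersection of $P$ matroids to the setting with geometric thresholds and an $\epsilon$-inexact oracle. The paper's partition differs slightly from yours---rather than asserting $|O_t|\leqslant P$ per block, it defines $C_t$ as the optimal elements that become infeasible precisely when $g_t$ is added, proves only the prefix bound $\sum_{i\leqslant t}|C_i|\leqslant Pt$ (Claim~\ref{cla:size}), and then uses an LP-duality lemma to conclude $\sum_t|C_t|\tau_t\leqslant P\sum_t\tau_t$. Your per-block bound is the more classical route and is fine for partition matroids, so this is not a substantive divergence. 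The paper also handles the final threshold $w_{L+1}=0$ separately (contributing the $\frac{\delta}{N}f(G)$ term in Claim~\ref{cla:gain}); you gloss over this, but it is a small correction.

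There is, however, a real gap in your error accounting that breaks the running-time claim. You aggregate the oracle errors to $\Ocal(\epsilon N)$ and then propose setting $\epsilon$ ``polynomially small in $\delta$ and $1/N$'' to absorb the additive term. But \continmax requires $r=\Ocal(1/\epsilon^2)$ samples, so $\epsilon$ scaling with $1/N$ inflates the sampling cost by a factor of $N^2$, far beyond the stated $\widetilde\Ocal(|\Item||\Node|/\delta^3)$ budget. The paper avoids this by (i) bounding the number of error terms by $|O\setminus G|\leqslant P|G|$ rather than $N$, so the aggregated error is $\Ocal(\epsilon P|G|)$, and (ii) invoking the influence-maximization-specific fact that every seed infects at least itself, hence $|G|\leqslant f(G)$. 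This converts the additive error into $\Ocal(\epsilon)\cdot f(G)$, which is absorbed into the multiplicative factor by choosing the \emph{constant} $\epsilon=\delta/16$. That choice keeps $r=\Ocal(1/\delta^2)$ and is what makes the stated running time achievable; without the observation $|G|\leqslant f(G)$, which your proposal does not mention, the approximation ratio and the running time cannot be reconciled.
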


The parameter $\delta$ introduces a tradeoff between the approximation guarantee and the runtime:
larger $\delta$ decreases the approximation ratio but needs fewer influence evaluations.
The running time has a linear dependence on the network size and the number of products to propagate (ignoring some small logarithmic terms),
so the algorithm is scalable to large networks.

\noindent{\bf Analysis.}
Suppose $G=\{g_1,\dots, g_{|G|}\}$ in the order of selection, and let $G^t =\{\g_1, \dots, \g_t\}$.
Let $C_t$ denote all those elements in $O \setminus G$ that satisfy the following:
they are still feasible before selecting the $t$-th element $g_t$ but are infeasible after selecting $g_t$.
That is, $C_t$ are all those elements $j\in O \setminus G$ such that:
(1) $j \cup G^{t-1}$ does not violate the matroid constraints but (2) $j \cup G^{t}$ violates the matroid constraints.
In other words, $C_t$ are the optimal elements ``blocked'' by $g_t$. See Figure~\ref{fig:greedyPartition} for an illustration. 

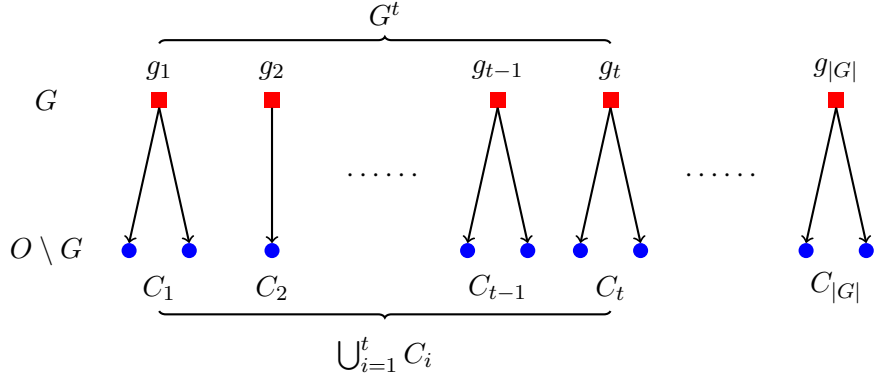
\begin{figure}[!t]
    \centering

\begin{tikzpicture}
\usetikzlibrary{patterns,snakes}
\newcommand*{\BlockWidth}{0.1}%
\pgfmathsetmacro{\Radius}{\BlockWidth}%
\pgfmathsetmacro{\Xoff}{4*\BlockWidth}%
\pgfmathsetmacro{\Yoff}{20*\BlockWidth}%

\node at (-6, 0) {$G$};
\node at (-6, -\Yoff) {$O \setminus G$};

\foreach \x in {-4.5, 0, 1.5, 4.5}
\foreach \y in {0}
{
	\path [fill=red] (\x-\BlockWidth,\y-\BlockWidth) rectangle (\x+\BlockWidth, \y+\BlockWidth);
 	\path [fill=blue] (\x - \Xoff, \y - \Yoff) circle (\Radius);
 	\path [fill=blue] (\x + \Xoff, \y - \Yoff) circle (\Radius);
 	\draw [->, thick] (\x, \y-\BlockWidth) to (\x - \Xoff, \y - \Yoff + \Radius);
 	\draw [->, thick] (\x, \y-\BlockWidth) to (\x + \Xoff, \y - \Yoff + \Radius);
 }
 \node (g1) at (-4.5, 0.4) {$g_1$};   \node (C1) at (-4.5, -\Yoff-0.5) {$C_1$};
 \node (gt1) at (0, 0.4) {$g_{t-1}$};   \node (Ct1) at (0, -\Yoff-0.5) {$C_{t-1}$};
 \node (gt) at (1.5, 0.4) {$g_t$};      \node (Ct) at (1.5, -\Yoff-0.5) {$C_t$};
 \node at (4.5, 0.4) {$g_{|G|}$}; \node at (4.5, -\Yoff-0.5) {$C_{|G|}$};
 
  \node at (-1.5, -\Yoff/2) {$\cdots\cdots$};
  \node at (3, -\Yoff/2) {$\cdots\cdots$};
 
 \foreach \x in {-3}
\foreach \y in {0}
{
	\path [fill=red] (\x-\BlockWidth,\y-\BlockWidth) rectangle (\x+\BlockWidth, \y+\BlockWidth);
 	\path [fill=blue] (\x, \y - \Yoff) circle (\Radius);
 	\draw [->, thick] (\x, \y-\BlockWidth) to (\x, \y - \Yoff + \Radius);
 }
  \node at (-3, 0.4) {$g_2$};   \node at (-3, -\Yoff-0.5) {$C_2$};
  
 \draw [
    thick,
    decoration={
        brace,
        mirror,
        raise=0.6cm
    },
    decorate
] (C1.north) -- (Ct.north) 
node [pos=0.5,anchor=north,yshift=-0.8cm] {$\bigcup_{i=1}^t C_i$}; 

 \draw [
    thick,
    decoration={
        brace,
        raise=0.1cm
    },
    decorate
] (g1.north) -- (gt.north) 
node [pos=0.5,anchor=north,yshift=0.8cm] {$G^t$}; 
 
\end{tikzpicture}

    \caption{Notation for analyzing Algorithm~\ref{alg:greedyFixedDensity}.
    The elements in the greedy solution $\Greedy$ are arranged according to the order of being selected in Step 3 in Algorithm~\ref{alg:greedyFixedDensity}.
    The elements in the optimal solution $O$ but not in the greedy solution $G$ are partitioned into groups $C_t (1 \leqslant t \leqslant |G|)$, where $C_t$ are those elements in $O\setminus G$ that are still feasible before selecting $g_t$ but are infeasible after selecting $g_t$.
    } \label{fig:greedyPartition}
\end{figure}

First, by the property of the intersection of matroids,
the size of the prefix $\bigcup_{i=1}^t C_t$ is bounded by $Pt$.
The property is that for any $Q \subseteq \Ground$, the sizes of any two maximal independent subsets $T_1$ and $T_2$ of $Q$
can only differ by a multiplicative factor at most $P$.
To see this, note that for any element $ z \in T_1 \setminus T_2$,
$\cbr{z} \cup T_2$ violates at least one of the matroid constraints since $T_2$ is maximal.
Let $V_i (1 \leqslant i \leqslant P)$ denote all elements in $T_1 \setminus T_2$ that violates the $i$-th matroid,
and then partition $T_1 \cap T_2$ arbitrarily among these $V_i$'s so that they cover $T_1$.
Note that the size of each $V_i$ must be at most that of $T_2$,
since otherwise by the Exchange axiom, there would exist $z \in V_i \setminus T_2$ that
can be added to $T_2$ without violating the $i$-th matroid, which is contradictory to the construction.
Therefore, the size of $T_1$ is at most $P$ times that of $T_2$.

To apply this property, let $Q$ be the union of $G^{t}$ and $\bigcup_{i=1}^t C_t$.
On one hand, $G^{t}$ is a maximal independent subset of $Q$, since no element in $\bigcup_{i=1}^t C_t$ can be added to $G^t$ without violating the matroid constraints.
On the other hand, $\bigcup_{i=1}^t C_t$ is an independent subset of $Q$, since it is part of the optimal solution.
Therefore, $\bigcup_{i=1}^t C_t$ has size at most $P$ times $|G^t|$, which is $Pt$.
Note that the properties of matroids are crucial for this analysis,
which justifies our formulation using matroids.
In summary, we have

\begin{claim}\label{cla:size}
$\sum_{i=1}^t |C_i| \leqslant P t$, for $t =1, \dots, |G|$.
\end{claim}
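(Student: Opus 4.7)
The plan is to apply a general matroid-intersection counting property to the set $Q = G^t \cup \bigcup_{i=1}^t C_i$, exhibiting $G^t$ as an independent subset of $Q$ that is maximal with respect to the elements of $\bigcup_{i=1}^t C_i$, and $\bigcup_{i=1}^t C_i$ itself as a disjoint independent subset of $Q$.

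First I would establish the following self-contained lemma: if $T_1, T_2 \subseteq \Ground$ are both independent in $\bigcap_{p=1}^P \Ical_p$, are disjoint, and $T_1 \cup \{z\} \notin \bigcap_{p=1}^P \Ical_p$ for every $z \in T_2$, then $|T_2| \leqslant P\,|T_1|$. The proof is a charging argument. For each $z \in T_2$, pick an index $p(z) \in \{1,\dots,P\}$ with $T_1 \cup \{z\} \notin \Ical_{p(z)}$, and set $V_p := \{z \in T_2 : p(z) = p\}$, so that $T_2 = V_1 \sqcup \dots \sqcup V_P$. Each $V_p$ is independent in $\Ical_p$, being a subset of $T_2$. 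Suppose for contradiction that $|V_p| > |T_1|$ for some $p$; then the Exchange axiom applied to $T_1$ and $V_p$ (both independent in the $p$-th matroid) produces some $z \in V_p \setminus T_1 = V_p$ with $T_1 \cup \{z\} \in \Ical_p$, contradicting the choice $p(z) = p$. Hence $|V_p| \leqslant |T_1|$ for every $p$, and summing gives $|T_2| = \sum_p |V_p| \leqslant P\,|T_1|$.

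Next I would apply the lemma with $T_1 = G^t$ and $T_2 = \bigcup_{i=1}^t C_i$. Disjointness is immediate because each $C_i$ lies in $O \setminus G$. Both sets are independent in $\bigcap_{p=1}^P \Ical_p$: $G^t$ by heredity from the feasibility of $G$, and $\bigcup_{i=1}^t C_i$ by heredity from the feasibility of the optimum $O$. The maximality hypothesis is also inherited from the construction of $C_j$: for $z \in C_j$ with $j \leqslant t$, by definition $G^j \cup \{z\}$ already violates some matroid constraint, and so does the superset $G^t \cup \{z\}$ by heredity. The lemma therefore yields $\bigl|\bigcup_{i=1}^t C_i\bigr| \leqslant P\,|G^t| = Pt$, which is exactly the claim.

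The step that requires the most care is the charging argument and the invocation of the Exchange axiom: having $T_1$ and $T_2$ disjoint is what makes $V_p \setminus T_1$ coincide with $V_p$ and drives the contradiction without any bookkeeping about shared elements. For this reason I would partition $T_2$ itself, rather than first partitioning $T_1 \setminus T_2$ and then redistributing $T_1 \cap T_2$ as the paper's informal sketch does. Everything else reduces to direct applications of the heredity axiom to $G^t \subseteq G$ and to $\bigcup_{i=1}^t C_i \subseteq O$, together with the definition of the blocked sets $C_i$.
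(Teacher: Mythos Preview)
Your proof is correct and follows essentially the same approach as the paper: both establish the same charging-to-matroids bound via the Exchange axiom and then apply it with the blocking set $G^t$ and the blocked set $\bigcup_{i=1}^t C_i$. Your formulation with an explicit disjointness hypothesis is a minor streamlining (it spares you the redistribution of $T_1 \cap T_2$), but the underlying argument is identical to the paper's.
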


Second, we compare the marginal gain of each element in $C_t$ to that of $g_t$.
Suppose $g_t$ is selected at the threshold $\tau_t>0$.
Then any $j \in C_t$ has marginal gain bounded by $(1+\delta)\tau_t + 2\epsilon$,
since otherwise $j$ would have been selected at a larger threshold before $\tau_t$ by the greedy criterion.
Now suppose $g_t$ is selected at the threshold $w_{L+1}=0$.
Then the marginal gain of any $j \in C_t$ is approximately bounded by $w_{L+1} \leqslant \frac{\delta}{\nGround} d$.
Since the greedy algorithm must pick $g_1$ with $\widehat f(g_1) = d$,  $d \leqslant f(g_1)+\epsilon$,
and the gain of $j$ is bounded by $\frac{\delta}{\nGround} f(G) + O(\epsilon)$.
All together:
\begin{claim}\label{cla:gain}
Suppose $g_t$ is selected at the threshold $\tau_t$. Then $f(j|G^{t-1}) \leqslant (1+\delta) \tau_t + 4\epsilon + \frac{\delta}{\nGround} f(G), \forall j \in C_t$.
\end{claim}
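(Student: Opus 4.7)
The plan is to show that for each $j \in C_t$, the greedy inner loop must have examined $j$ at some moment strictly before $g_t$ was added, with the current working set $G'$ satisfying $G' \subseteq G^{t-1}$, and that $j$ was rejected at that moment because its empirical marginal gain fell below the active threshold. I then transport this bound through the approximation guarantee $|\widehat f - f| \leqslant \epsilon$ and submodularity of $f$ to obtain the desired upper bound on $f(j \mid G^{t-1})$.

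The first step is to observe that $j$ is \emph{always feasible} at every earlier examination. Since $j \in C_t$ means $G^{t-1} \cup \{j\}$ still lies in the intersection of matroid constraints, the heredity axiom implies $G' \cup \{j\}$ lies in that intersection for every $G' \subseteq G^{t-1}$ encountered during the rounds preceding the addition of $g_t$. Consequently, the only reason $j$ was not added on any such examination is that the marginal gain test $\widehat f(j \mid G') \geqslant w_s$ failed, where $w_s$ is the threshold active at that moment.

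When $\tau_t = w_{s_t}$ for some $s_t \geqslant 1$, I look at the examination of $j$ during the preceding round $s_t - 1$, whose threshold is $w_{s_t - 1} = (1+\delta)\tau_t$. Because every element added in rounds at most $s_t - 1$ was added before $g_t$, the working set $G'$ at that moment satisfies $G' \subseteq G^{t-1}$. The failed threshold test yields $\widehat f(j \mid G') < (1+\delta)\tau_t$. Combining $|\widehat f - f| \leqslant \epsilon$, which upgrades to $f(j \mid G') \leqslant \widehat f(j \mid G') + 2\epsilon$ on marginal gains, with submodularity $f(j \mid G^{t-1}) \leqslant f(j \mid G')$, gives $f(j \mid G^{t-1}) < (1+\delta)\tau_t + 2\epsilon$, which lies inside the claimed bound. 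The corner case $s_t = 0$ is handled directly: $\tau_t = w_0 = d$, and monotonicity with the definition of $d$ gives $f(j \mid G^{t-1}) \leqslant f(\{j\}) \leqslant d = \tau_t$.

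When $\tau_t = w_{L+1} = 0$, I apply exactly the same argument at round $L$, whose threshold satisfies $w_L \leqslant \delta d / \nGround$ by the termination rule of the algorithm, yielding $f(j \mid G^{t-1}) \leqslant \delta d / \nGround + 2\epsilon$. I then convert $d$ into $f(G)$: since $g_1$ is added at threshold $w_0 = d$, we have $\widehat f(\{g_1\}) \geqslant d$, so $f(G) \geqslant f(\{g_1\}) \geqslant d - 2\epsilon$ by monotonicity and the approximation bound, and therefore $\delta d / \nGround \leqslant \delta f(G)/\nGround + 2\delta\epsilon/\nGround$, which is absorbed into the $4\epsilon$ slack. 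The main obstacle is not any single inequality but the bookkeeping: I must verify that the examination I invoke occurs \emph{before} $g_t$ is added so that the working set is genuinely contained in $G^{t-1}$, which is what converts submodularity from a useless lower bound into the needed upper bound on $f(j \mid G^{t-1})$; the matroid heredity step is what licenses attributing the non-selection of $j$ to failure of the marginal-gain test rather than to a feasibility violation.
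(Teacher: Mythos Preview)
Your proof is correct and follows essentially the same approach as the paper: split on whether $\tau_t>0$ or $\tau_t=w_{L+1}=0$, argue that $j$ must have failed the marginal-gain threshold at the preceding round (using matroid heredity to rule out feasibility as the cause of rejection), and then convert $d$ to $f(G)$ via $g_1$ in the last case. If anything, your bookkeeping is more explicit than the paper's: you carefully track the working set $G'\subseteq G^{t-1}$ at the moment of examination and invoke submodularity to pass from $f(j\mid G')$ to $f(j\mid G^{t-1})$, whereas the paper writes the bound directly at $G^{t-1}$ (and in the $\tau_t=0$ case actually only bounds $f(j\mid G)$, which suffices downstream but is weaker than the stated claim). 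One small slip: in your corner case $s_t=0$ you write $f(\{j\})\leqslant d$, but since $d$ is computed from $\widehat f$ you only get $f(\{j\})\leqslant d+\epsilon$; this is harmless since it still lands well within the $(1+\delta)\tau_t+4\epsilon$ bound.
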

Note that the evaluation of the marginal gain of $g_t$ should be at least $\tau_t$,
so this claims essentially says that the marginal gain of $j$ is approximately bounded by that of $g_t$.

As there are not many elements in $C_t$ (Claim~\ref{cla:size}) and the marginal gain of each element in it
is not much larger than that of $g_t$ (Claim~\ref{cla:gain}),
the marginal gain of $O \setminus G = \bigcup_{i=1}^{|G|} C_t$ is not much larger than that of $G$,
which is just $f(G)$.

\begin{claim}\label{cla:com}
The marginal gain of $O\setminus G$ satisfies
$$\sum_{j \in O\setminus G} f(j|\Greedy) \leqslant [(1+\delta) P + \delta] f(G)  + (6+2\delta) \epsilon P |G|. $$
\end{claim}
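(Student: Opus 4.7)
The plan is to decompose $O\setminus G$ according to the blocking partition $\{C_t\}_{t=1}^{|G|}$ introduced before Claim~\ref{cla:size}, apply submodularity and Claim~\ref{cla:gain} group by group, and then combine the size bound of Claim~\ref{cla:size} with a rearrangement (Abel summation) argument to relate the totals to $f(G)$.

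First, by submodularity of $f$, for every $t$ and every $j \in C_t$ we have $f(j \mid G) \leqslant f(j \mid G^{t-1})$. Summing over the partition yields
\begin{align*}
\sum_{j\in O\setminus G} f(j\mid G)
  \;\leqslant\; \sum_{t=1}^{|G|} \sum_{j\in C_t} f(j\mid G^{t-1}).
\end{align*}
Then I would plug in Claim~\ref{cla:gain} termwise to obtain
\begin{align*}
\sum_{j\in O\setminus G} f(j\mid G)
  \;\leqslant\; (1+\delta)\sum_{t=1}^{|G|}|C_t|\tau_t
  + 4\epsilon\sum_{t=1}^{|G|}|C_t|
  + \frac{\delta}{\nGround} f(G) \sum_{t=1}^{|G|}|C_t|.
\end{align*}
The last two sums are immediately controlled: $\sum_t|C_t| = |O\setminus G| \leqslant P|G|$ from Claim~\ref{cla:size} with $t=|G|$, and also $|O\setminus G|\leqslant\nGround$, so these two tail terms contribute at most $4\epsilon P|G| + \delta f(G)$.

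The main step, and also the one I expect to be the main obstacle, is to bound $\sum_t |C_t|\tau_t$. The thresholds $\tau_t$ are non-increasing in $t$ because Algorithm~\ref{alg:greedyFixedDensity} sweeps $w_t$ from large to small, and the prefix bound $\sum_{i\leqslant s}|C_i|\leqslant Ps$ of Claim~\ref{cla:size} is exactly what is needed for a rearrangement inequality against a non-increasing sequence. Concretely, applying Abel summation with $b_{|G|+1}:=0$ gives
\begin{align*}
\sum_{t=1}^{|G|} |C_t|\tau_t
 = \sum_{s=1}^{|G|}\bigl(\tau_s-\tau_{s+1}\bigr)\sum_{t=1}^{s}|C_t|
 \;\leqslant\; P\sum_{s=1}^{|G|} s\,(\tau_s-\tau_{s+1})
 \;=\; P \sum_{s=1}^{|G|}\tau_s,
\end{align*}
where the last equality is a telescoping identity. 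What remains is to convert $\sum_s\tau_s$ back into $f(G)$: since $g_t$ passed the threshold test at $w=\tau_t$, we have $\widehat f(g_t\mid G^{t-1})\geqslant\tau_t$, and by the $\epsilon$-accuracy of $\widehat f$ this gives $f(g_t\mid G^{t-1})\geqslant\tau_t-2\epsilon$. Telescoping over $t$ yields $\sum_t \tau_t \leqslant f(G) + 2\epsilon|G|$.

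Finally, I would put the pieces together. The $\tau_t$ term contributes $(1+\delta)P\bigl[f(G)+2\epsilon|G|\bigr]$, the $4\epsilon|C_t|$ term contributes $4\epsilon P|G|$, and the $\frac{\delta}{\nGround}f(G)|C_t|$ term contributes at most $\delta f(G)$. Collecting the $f(G)$ coefficients gives $(1+\delta)P+\delta$, and collecting the $\epsilon$-error coefficients gives $2(1+\delta)P+4P = (6+2\delta)P$, yielding exactly the claimed bound
\begin{align*}
\sum_{j\in O\setminus G} f(j\mid G) \;\leqslant\; \bigl[(1+\delta)P+\delta\bigr]f(G) + (6+2\delta)\epsilon P|G|.
\end{align*}
The technical subtlety to be careful about is the final ``$\tau=0$'' batch at $w_{L+1}$: there Claim~\ref{cla:gain} still gives the stated bound (the $(1+\delta)\tau_t$ part simply vanishes), and the Abel argument is unaffected, so the edge case does not cost anything extra.
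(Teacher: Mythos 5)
Your proof is correct and follows the same overall route as the paper's: decompose $O\setminus G$ into the blocking groups $C_t$, use submodularity to pass from $f(j\mid G)$ to $f(j\mid G^{t-1})$, apply Claim~\ref{cla:gain} termwise, control the error terms via $\sum_t |C_t| \leqslant P|G|$ and $\sum_t|C_t|\leqslant \nGround$, and close with $\sum_t \tau_t \leqslant f(G)+2\epsilon|G|$. The one place you genuinely diverge is the key rearrangement step $\sum_t |C_t|\tau_t \leqslant P\sum_t \tau_t$: the paper outsources this to a technical lemma (Lemma~\ref{lem:seqsum}) proved by writing the maximization of $\sum_i \rho_i\sigma_i$ under the prefix constraints as a linear program and exhibiting a feasible dual solution, whereas you obtain it directly by Abel summation against the non-increasing sequence $\tau_t$ (with $\tau_{|G|+1}:=0$), using that each prefix sum $\sum_{i\leqslant s}|C_i|$ is at most $Ps$ and that $\sum_s s(\tau_s-\tau_{s+1})$ telescopes to $\sum_s\tau_s$. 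Both arguments prove the same inequality; yours is more elementary and self-contained, while the paper's LP-duality formulation states the rearrangement bound as a reusable standalone lemma. Your final bookkeeping, $2(1+\delta)P+4P=(6+2\delta)P$ for the $\epsilon$-coefficient and $(1+\delta)P+\delta$ for the $f(G)$-coefficient, matches the claimed constants exactly, and your handling of the $w_{L+1}=0$ batch is consistent with how Claim~\ref{cla:gain} absorbs that case.
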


Since by submodulairty, $f(O) \leqslant f(O\cup G) \leqslant f(G) + \sum_{j \in O\setminus G} f(j|\Greedy)$,
Claim~\ref{cla:com} essentially shows $f(G)$ is close to $f(O)$ up to a multiplicative factor roughly $(1+P)$  and aditive factor $O(\epsilon P |G|)$.
Since $f(G) > |G|$, it leads to roughly $1/3$-approximation for our influence maximization problem by setting $\epsilon = \delta/16$ when evaluating $\widehat f$ with \continmax. Combining the above analysis and the running time of \continmax~\citep{DuSonZhaGom13}, we have our final guarantee in Theorem~\ref{thm:infMax_uni}.

\subsection{General Case}\label{sec:nonuni}

Here we consider the more general and more challenging case when the users may have different costs. Recall that this case corresponds to Problem~\ref{pro:infMax} with $P=1$ matroid constraints and $k=|\Item|$ group-knapsack constraints. We show that in Algorithm~\ref{alg:densityEnu}, there is a step which outputs a solution $S_\rho$ that is a good approximation.

\noindent{\bf Intuition.} The key idea behind Algorithm~\ref{alg:densityEnu} and Algorithm~\ref{alg:greedyFixedDensity} is simple:
spend the budgets efficiently and spend them as much as possible.
To spend them efficiently, we only select those elements whose density ratio between the marginal gain and the cost is above the threshold $\rho$.
That is, we assign product $i$ to user $j$ only if the assignment leads to large marginal gain without paying too much.
To spend the budgets as much as possible, we stop assigning product $i$ only if its budget is almost exhausted or
no more assignment is possible without violating the matroid constraints.
Here we make use of the special structure of the knapsack constraints on the budgets:
each constraint is only related to the assignment of the corresponding product and its budget,
so that when the budget of one product is exhausted, it does not affect the assignment of the other products.
In the language of submodular optimization, the knapsack constraints are on a partition $\Ground_{i*}$ of the ground set
and the objective function is a sum of submodular functions over the partition.
For general knapsack constraints without such structure, it may not be possible to continue selecting elements as in our case.

However, there seems to be an hidden contradiction between spending the budgets efficiently and spending them as much as possible:
on one hand, efficiency means the density ratio should be large, so the threshold $\rho$ should be large;
on the other hand, if $\rho$ is large, there are just a few elements that can be considered, then the budget might not be exhausted.
After all, if we set $\rho$ to be even larger than the maximum possible, then no element is considered and no gain is achieved.
In the other extreme, if we set $\rho=0$ and consider all the elements, then a few elements with large costs might be selected, exhausting all the budgets and leading to a poor solution.

It turns out that there exists a suitable threshold $\rho$ achieving a good balance between the two and leads to good approximation.
The threshold is sufficiently small, so that the optimal elements we abandon (\ie, those with low density ratio) have a total gain at most a fraction of the optimum.
It is also sufficiently large, so that the elements selected are of high quality (\ie, of high density ratio), and we must have sufficient gain if the budgets of some items are exhausted. Formally, for our influence maximization problem,

\begin{theorem}\label{thm:infMax}
In Algorithm~\ref{alg:densityEnu}, there exists a $\rho$ such that
$$
    f(S_\rho) \geqslant  \frac{\max\cbr{k_a, 1} }{(2|\Item|+2) (1+3\delta)} f(O)
$$
where $k_a$ is the number of active knapsack constraints.
The expected running time is $\widetilde\Ocal\left(\frac{|\Edge_{i^*}|+|\Node|}{\delta^2}  + \frac{|\Item||\Node|}{\delta^4} \right).$
\end{theorem}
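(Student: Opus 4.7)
The plan is to exhibit a single threshold $\rho$ in the enumerated grid such that $S_\rho$ alone satisfies the claimed bound; the final $\argmax$ step of Algorithm~\ref{alg:densityEnu} then inherits this guarantee. Since $d \leqslant f(O) \leqslant \nGround d$ and the grid spans $[\frac{2d}{P+2k+1},\frac{2\nGround d}{P+2k+1}]$ geometrically with ratio $1+\delta$, any target $\rho^*$ of order $\Theta(f(O)/(|\Item|+1))$ is approximated by some enumerated $\rho$ to within a factor $1+\delta$. I would implicitly define $\rho^*$ so that the ``density-slack'' term $\rho^* k$ and the matroid-slack term $(1+\delta)f(S_\rho)$ balance at the desired value $\tfrac{1}{2|\Item|+2} f(O)$.

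Fix such a $\rho$ and write $S = S_\rho$. By the termination of Algorithm~\ref{alg:greedyFixedDensity}, for each product $i$ one of two things must hold: (a) the knapsack is \emph{tight}, $c(S \cap \Ground_{i*}) > 1 - c_{\max}$; or (b) every $z \in \Ground_{i*} \setminus S$ either violates the matroid $\Mcal_1$ against $S$, or fails the density test $f(z|S)/c(z) \geqslant \rho$ (up to the $\widehat f$-error). For category (a) products, the density filter immediately gives $f(S\cap \Ground_{i*}) \geqslant (1-c_{\max})\rho$, so summing over the $k_a$ tight products yields the first lower bound
\begin{equation*}
   f(S) \;\geqslant\; k_a\,(1-c_{\max})\,\rho,
\end{equation*}
which is what drives the numerator $k_a$ in the theorem.

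For the other direction I would bound $f(O)-f(S)$ by splitting $O\setminus S$ into a low-density part $O^L$ (density $<\rho$) and a high-density part $O^H$ (density $\geqslant\rho$). The low-density piece is controlled by the density test directly, $f(O^L \mid S) \leqslant \rho\,c(O^L) \leqslant \rho\,k$, using that $c(O)\leqslant k$. The high-density piece is rejected only for matroid reasons, so I would rerun the argument behind Claim~\ref{cla:size}--Claim~\ref{cla:com} with $P=1$ restricted to $O^H$: partition $O^H$ into the groups $C_t$ associated to the selected $g_t$ via the matroid exchange property, obtain $\sum_{i\leqslant t}|C_i|\leqslant t$, and upper-bound each $f(j\mid S)$ for $j\in C_t$ by $(1+\delta)\tau_t + 4\epsilon + \frac{\delta}{\nGround}f(S)$. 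Summing telescopes to $f(O^H\mid S)\leqslant (1+2\delta)f(S) + \Ocal(\epsilon\nGround)$. Submodularity then gives the master inequality $f(O) \leqslant (2+2\delta)\,f(S) + \rho k + \Ocal(\epsilon\nGround)$.

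Combining the knapsack-tight lower bound with the master inequality, plugging in the grid approximation of $\rho^*$, and solving for $f(S)$ yields $f(S) \geqslant \tfrac{\max\{k_a,1\}}{(2|\Item|+2)(1+3\delta)} f(O)$: the $\max$ is needed because when $k_a=0$ the knapsack-tight bound vanishes and we must fall back on the pure matroid bound $f(S)\geqslant f(O)/[(2|\Item|+2)(1+3\delta)]$ obtained by sending $\rho\to 0$; the $(1+3\delta)$ absorbs the grid slack $(1+\delta)$, the $(1+\delta)$ from Claim~\ref{cla:gain}, and the \continmax error upon setting $\epsilon=\Theta(\delta/\nGround)$. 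The runtime claim follows by multiplying the per-$\rho$ complexity from Theorem~\ref{thm:infMax_uni} by the grid size $\Ocal(\tfrac{1}{\delta}\log\tfrac{\nGround}{\delta})$, which explains the extra $1/\delta$ factor on the second term. The main obstacle is that the two lower bounds on $f(S)$ pull $\rho$ in opposite directions---the knapsack bound $k_a(1-c_{\max})\rho$ favors large $\rho$, while keeping $\rho k$ small in the master inequality favors small $\rho$---so a single fixed $\rho$ must simultaneously render both bounds informative; the geometric enumeration of thresholds is precisely what removes this tension and costs only the $(1+3\delta)$ multiplicative loss.
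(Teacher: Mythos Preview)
Your overall architecture---pick a $\rho$ near $\rho^*=\frac{2f(O)}{P+2k+1}$, split $O\setminus S$ into low- and high-density parts, and do a case analysis on whether any knapsack is active---matches the paper. But the two places where you depart from the paper both break.

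\textbf{The knapsack-tight bound.} Your inequality $f(S\cap\Ground_{i*})\geqslant(1-c_{\max})\rho$ is vacuous: the paper allows $c_{ij}\in(0,1]$, so $c_{\max}$ may equal $1$. The paper avoids this factor entirely. When knapsack $i$ becomes active upon trying to add $z$, one has $c(G_i\cup\{z\})>1$, and the density filter gives $f_i(G_i\cup\{z\})\geqslant\rho-\Ocal(\epsilon|G_i|)$. The missing idea is to bound $f_i(z\mid G_i)$ by $(1+\delta)f_i(G_i)+\Ocal(\epsilon)$, using the decreasing-threshold structure: since $G_i\neq\emptyset$, some element of $G_i$ was picked at a threshold $\geqslant w_t$, so $f_i(G_i)\geqslant w_t-2\epsilon$; and $z$ was not picked at any earlier threshold, so $f_i(z\mid G_i)\leqslant(1+\delta)w_t+2\epsilon$. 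Together these give $f_i(G_i)\geqslant\rho/(2+\delta)$ with no dependence on $c_{\max}$, and summing over the $k_a$ active constraints yields the numerator $k_a$.

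\textbf{The choice of $\epsilon$.} Taking $\epsilon=\Theta(\delta/\nGround)$ to kill the $\Ocal(\epsilon\nGround)$ term would make the \continmax\ sample complexity $\Ocal(1/\epsilon^2)=\Ocal(\nGround^2/\delta^2)$, destroying the claimed runtime. The paper instead takes $\epsilon=\delta/16$ and uses the influence-specific inequality $|S_\rho|\leqslant f(S_\rho)$ (every selected source infects at least itself) to turn the additive error $\Ocal(\epsilon|S_\rho|)$ into a multiplicative $(1+\Ocal(\delta))$ loss; this is what keeps the second term at $|\Item||\Node|/\delta^4$.

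A smaller point: your master inequality relies on every $j\in O^H$ being blocked only by the matroid, which is guaranteed only when $k_a=0$. The paper keeps the two cases fully separate (using the matroid argument when $k_a=0$ and only the per-knapsack bound when $k_a>0$), rather than ``combining'' them; your write-up should make that split explicit.
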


The approximation factor improves over the best known guarantee $\frac{1}{P+2k + 1} = \frac{1}{2|\Item| + 2}$ for effciently maximizing submodular functions over $P$ matroids and $k$ general knapsack constraints. As in the uniform cost case, the parameter $\delta$ introduces a tradeoff between the approximation and the runtime. Since the runntime has a linear dependence on the network size, the algorithm easily scales to large networks.

\noindent{\bf Analysis.} The analysis follows the intuition. Pick $\rho = \frac{2 f(O)}{P+2k+1}$ where $O$ is the optimal solution. Define 
\begin{align*}
    O_- & := \cbr{z \in O\setminus S_\rho: f(z| S_\rho) < c(z) \rho + 2\epsilon }, \\
    O_+ & := \cbr{z \in O\setminus S_\rho: z\not\in O_-}.
\end{align*}
By submodularity, $O_-$ is a superset of the elements in the optimal solution that we abandon due to the density threshold.
By construction, its marginal gain is small:
$$
    f(O_- | S_\rho) \leqslant \rho c(O_-) + \Ocal(\epsilon |S_\rho|) \leqslant k\rho + + \Ocal(\epsilon |S_\rho|)
$$
where the small additive term $\Ocal(\epsilon |S_\rho|)$ is due to inexact function evaluations.

First, if no knapsack constraints are active, then the algorithm runs as if there were no knapsack constraints
(but only on elements with density ratio above $\rho$).
So we can apply the argument for the case with only matroid constraints (see the analysis up to Claim~\ref{cla:com} in Section~\ref{sec:uni});
but we apply it on $O_+$ instead of on $O\setminus S_\rho$.
Similar to Claim~\ref{cla:com}, we have
$$
    f(O_+| S_\rho) \leqslant [(1+\delta)P+\delta ] f(S_\rho) + \Ocal(\epsilon P |S_\rho|)
$$
where the small additive term $\Ocal(\epsilon P |S_\rho|)$ is due to inexact function evaluations.
By the fact that $f(O) \leqslant f(S_\rho) + f(O_-|S_\rho) + f(O_+ |S_\rho)$,
we know that $S_\rho$ is roughly a $\frac{1}{P+ 2k + 1}$-approximation.

Second, suppose $k_a >0$ knapsack constraints are active.
Suppose the algorithm discovers that the budget of product $i$ is exhausted
when trying to add element $z$, and the elements selected for product $i$ at that time is $G_i$.
Since $c(G_i \cup \cbr{z}) > 1$ and each of these elements has density above $\rho$,
the gain of $G_i \cup \cbr{z}$ is above $\rho$.
However, only $G_i$ is included in our final solution, so we need to show that the marginal gain of $z$ is not large compared to that of $G_i$.
In fact, the algorithm greedily selects elements with marginal gain above a decreasing threshold $w_t$.
Since $z$ is the last element selected and $G_i$ is nonempty (otherwise adding $z$ will not exhaust the budget),
the marginal gain of $z$ must be bounded by roughly that of $G_i$.
In summary, the gain of $G_i$ is at least roughly $\frac{1}{2}\rho$.
This holds for all active knapsack constraints,
so the solution has value at least $\frac{k_a }{2}\rho$, which is an $\frac{k_a }{P+2k+1}$-approximation.

Combining the two cases, and setting $k=|\Item|$ and $P=1$ as in our problem, we have our final guarantee in Theorem~\ref{thm:infMax}.

\section{Experiments} \label{sec:exp}

We systematically evaluate the performance and scalability of our algorithm, denoted by \budgetmax, on both the synthetic datasets mimicking the structural properties of real-world networks and the real Memetracker datasets~\citep{LesBacKle09} crawled from massive media-sites. We compare \budgetmax to its counterpart based on the learned classic discrete-time diffusion model, the particularly designed degree-based heuristics,  as well as the random baseline to show that \budgetmax achieves significant performance gains in both cases.
\subsection{Synthetic Data}
\noindent{\bf Synthetic Diffusion Network Generation.} We assume that products have different diffusion network structures. In particular, we allow each product to spread over one of the following three different types of Kronecker networks\citep{LesChaKleFaletal10}: (\rmnum{1}) core-periphery networks (parameter matrix: [0.9 0.5; 0.5 0.3]) mimicking the diffusion traces of information in real world networks~\citep{GomLesKra10},
(\rmnum{2}) the classic random networks ([0.5 0.5; 0.5 0.5]) used in physics and graph theory~\citep{EasKle10}
as well as (\rmnum{3}) hierarchical networks ([0.9 0.1; 0.1 0.9])~\citep{GomBalSch11}. Once the network structure is generated, we assign a general Weibull
distribution~\citep{JerLaw02} with randomly chosen parameters from 1 to 10 in order to have heterogeneous temporal dynamics.
In our experiments we have 64 products, each of which diffuses over one of the above three different types of networks with 1,048,576 nodes. Then, we further randomly select a subset $\Vcal_S\subseteq\Vcal$ of 512 nodes as our candidate target users who will receive the given 64 products. The potential influence of an allocation will be evaluated over the underlying one-million-node networks.

\subsubsection{Influence Maximization with Uniform Costs}

\noindent{\bf Competitors}.We compare \budgetmax with nodes' degree-based heuristics of the diffusion network which are usually applied in social network analysis, where the degree is treated as a natural measure of influence. Large-degree nodes, such as users with millions of followers in Twitter, are often the targeted users who will receive a considerable payment if he (she) agrees to post the adoption of some products (or ads) from merchants. As a consequence,  we first sort the list of all pairs of product $i$ and node $j\in\Vcal_S$ in the descending order of node-$j$'s degree in the diffusion network of product $i$. Then, starting from the beginning of the list, we add each pair one by one. When the addition of the current pair to the existing solution violates the predefined matroid constraints, we simply throw it and continue to search the next pair until we reach the end of the list. Therefore, we greedily assign products to the nodes with large degree, and we refer to this heuristic as \gdegree.  Finally, we consider the baseline method that assigns the products to the target nodes randomly.  Due to the large size of the underlying diffusion networks, we do not apply other more expensive node centrality measures such as the clustering coefficient and betweenness.

\begin{figure*}[t]
 \centering
 \renewcommand{\tabcolsep}{0pt}
 \begin{tabular}{ccccc}
\includegraphics[width=0.2\textwidth]{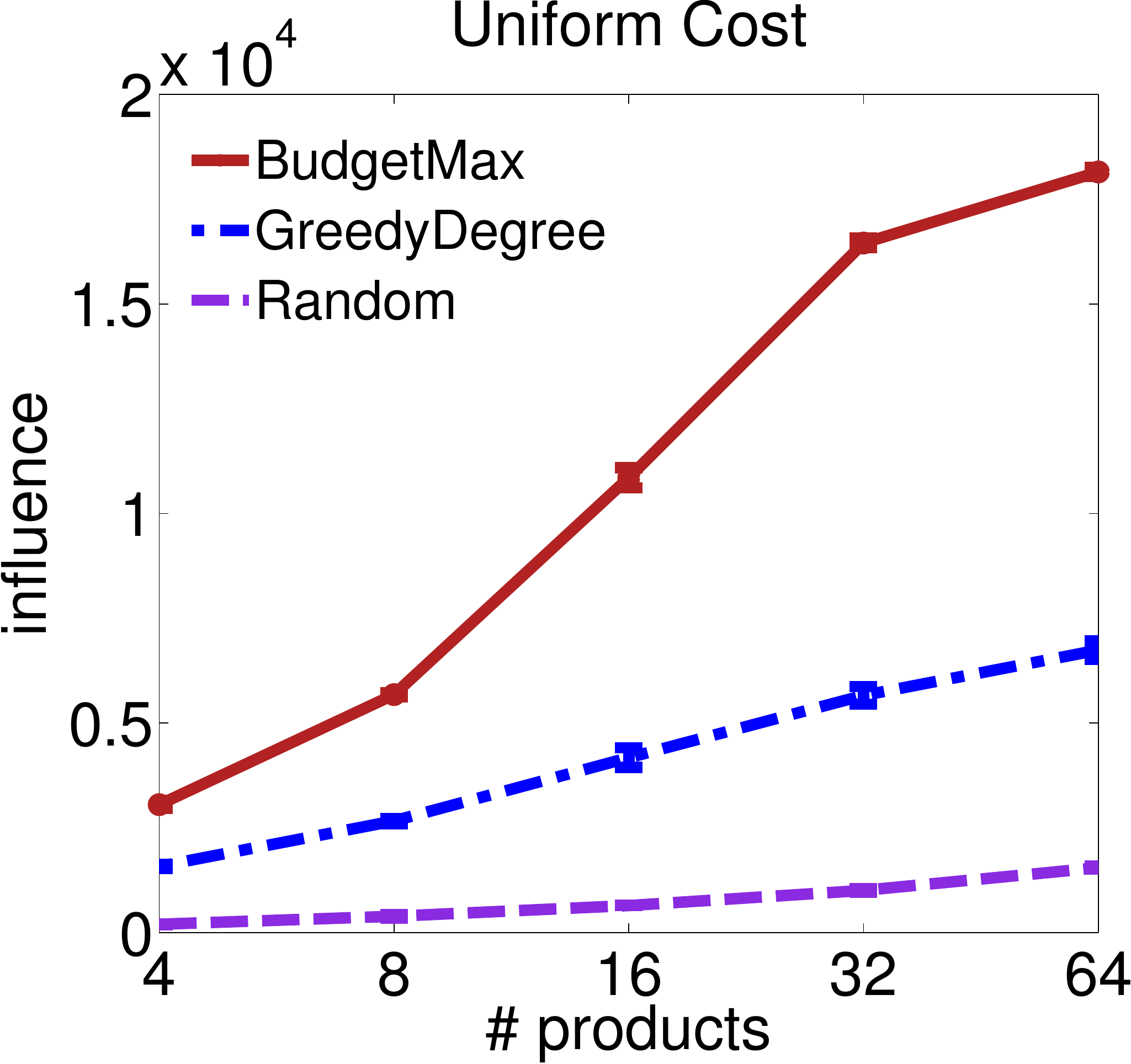} &
\includegraphics[width=0.2\textwidth]{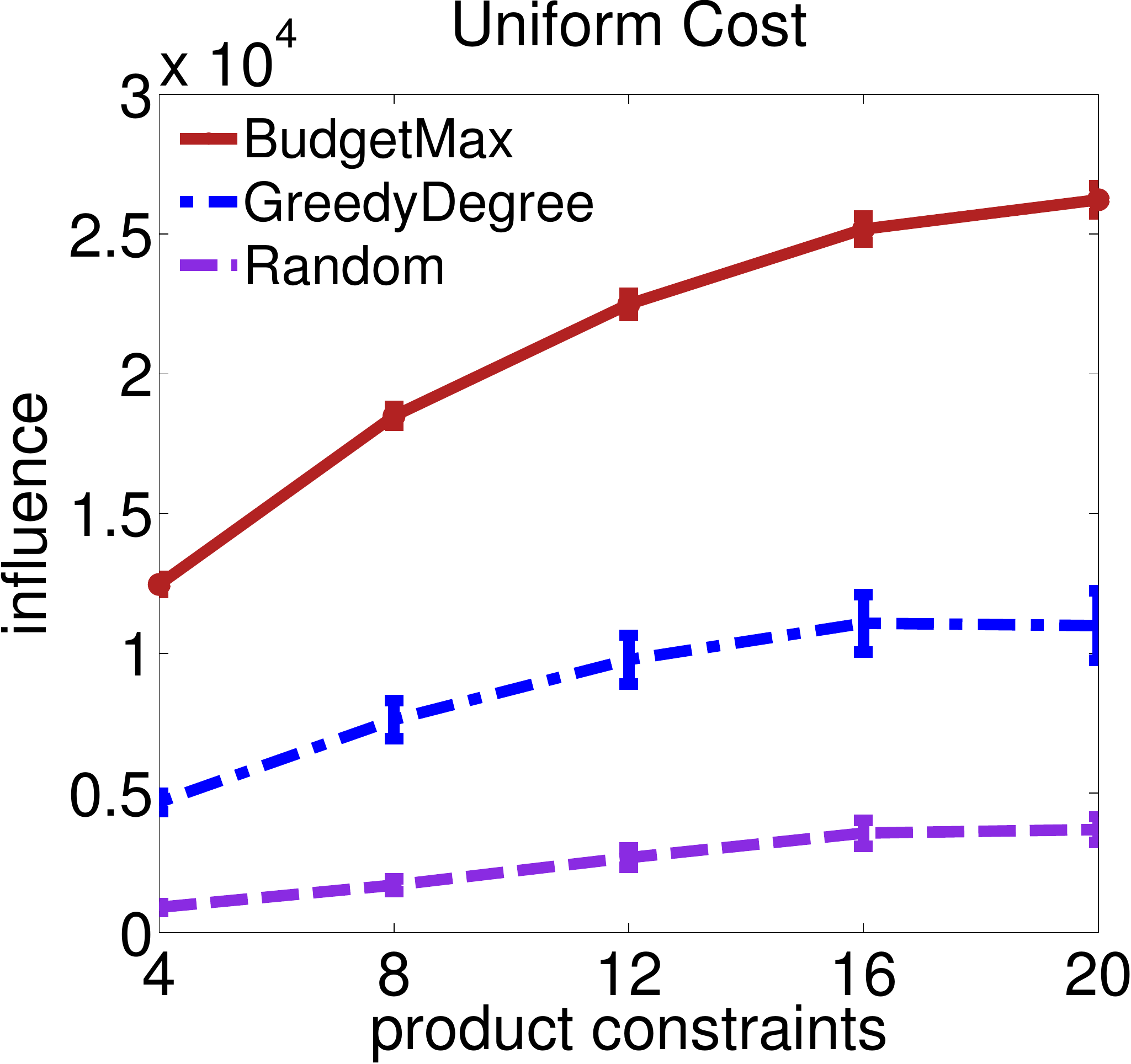} &
\includegraphics[width=0.2\textwidth]{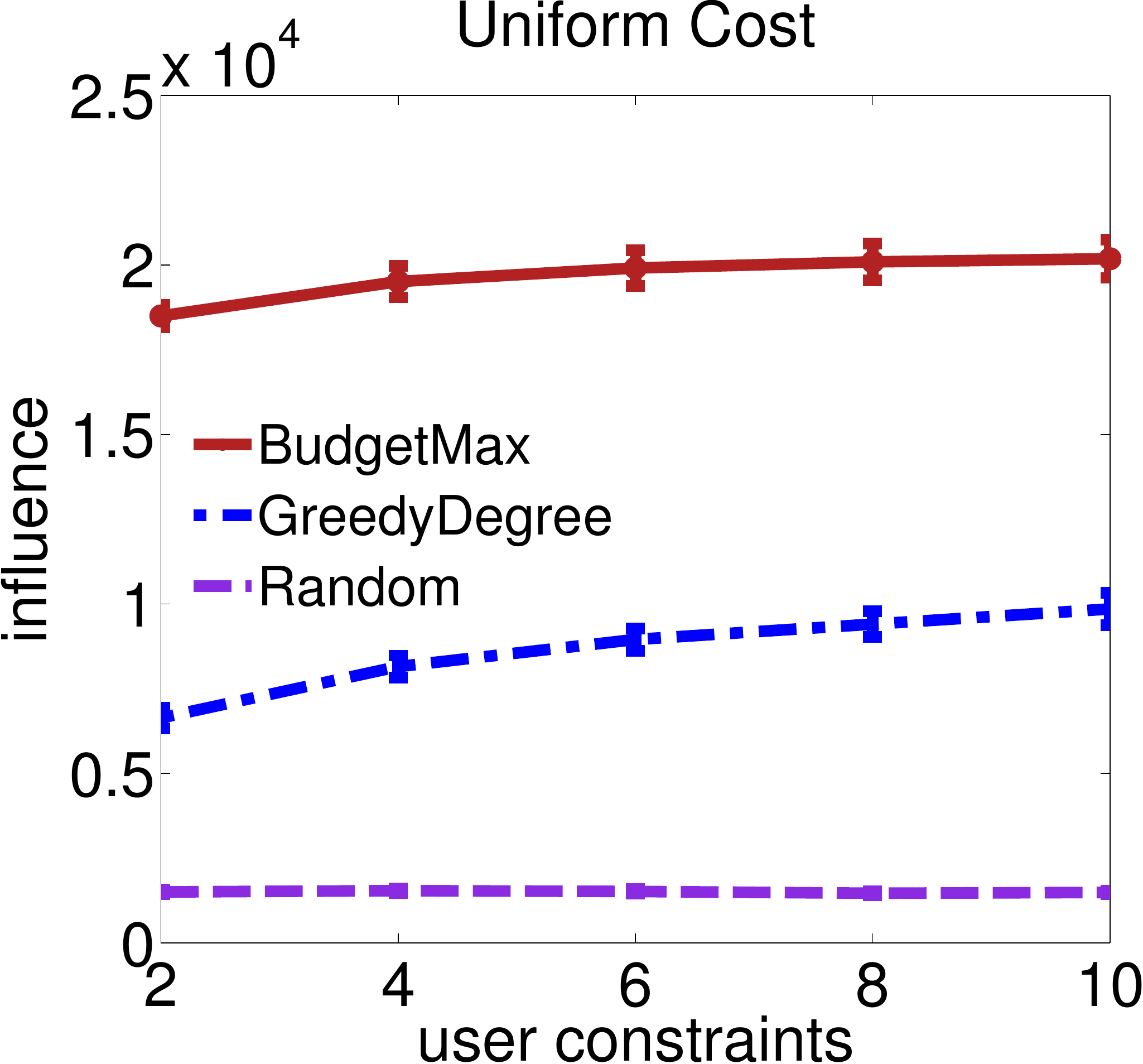} &
\includegraphics[width=0.2\textwidth]{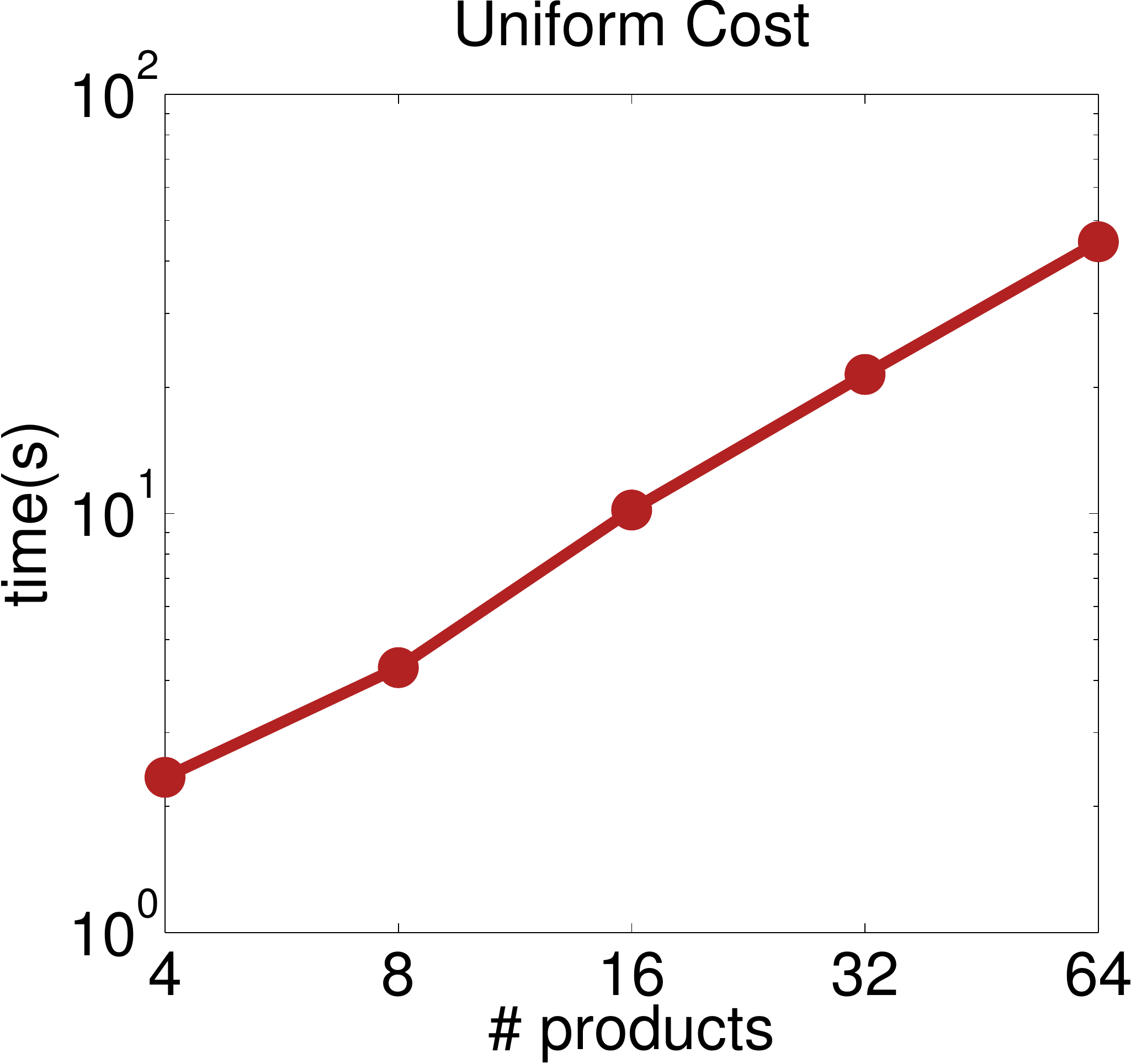} &
\includegraphics[width=0.2\textwidth]{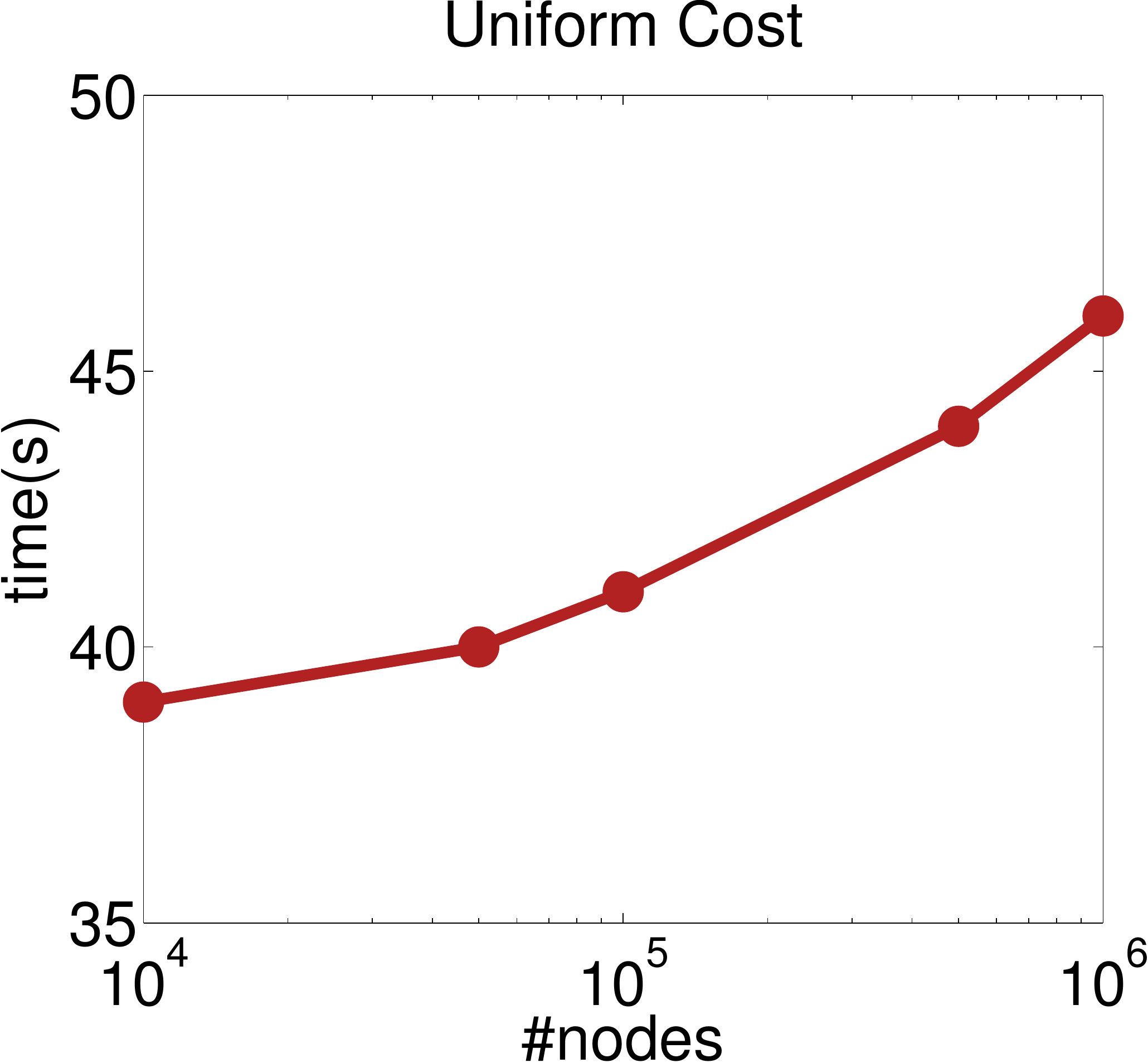} \\
(a) By products & (b) By product constraints & (c) By user constraints & (d) Speed by products & (e) Speed by nodes
\end{tabular}
 \caption{\label{inf-syn} Over the 64 product-specific diffusion networks, each of which has 1,048,576 nodes,  the estimated influence (a) for increasing the number of products by fixing the product-constraint at 8 and user-constraint at 2; (b) for increasing product-constraint by user-constraint at 2; and (c) for increasing user-constraint by fixing product-constraint at 8. Fixing product-constraint at 8 and user-constraint at 2, runtime (d)  for allocating increasing number of products  and (e) for allocating 64 products to 512 users on networks of varying size. For all experiments, we have $T=5$ time window.}
\end{figure*}

\noindent{\bf Influence Maximization.}. On each of the 64 product-specific diffusion networks, we generate a set of 2,048 samples to estimate the influence of each node according to~\citep{DuSonZhaGom13}. We repeat our experiments for 10 times and report the average performance in Figure~\ref{inf-syn} in which the adaptive threshold $\delta$ is set to 0.01. First, Figure~\ref{inf-syn}(a) compares the achieved influence by increasing the number of available products, each of which has constraint 8. As the number of products increases, on the one hand, more and more nodes become assigned, so the total influence will increase. Yet, on the other hand, the competitions for a few existing valuable nodes from which information diffuses faster also increases. For \gdegree, because high degree nodes may have many overlapping children and highly clustered, the marginal gain by targeting only these nodes could be small. In contrast, by taking both the network structure and the diffusion dynamics of the edges into consideration, \budgetmax is able to find allocations that could reach as many nodes as possible as time
unfolds. In Figure~\ref{inf-syn}(b), we fix the set of 64 products while increasing the number of budget per product. Again, as the competitions increase, the performance of \gdegree tends to converge, while the advantage of \budgetmax becomes more dramatic. We investigate the effect of increasing the user constraint while fixing all the other parameters. As Figure~\ref{inf-syn}(c) shows, the influence increases slowly for that fixed budget prevents additional new nodes to be assigned. This meets our intuition for that only making a fixed number of people watching more ads per day can hardly boost the popularity of the product. Moreover, even though the same node can be assigned to more products, because of the different diffusion structures, it cannot be the perfect source from which all products can efficiently spread.

\noindent{\bf Scalability}. We further investigate the performance of \budgetmax in terms of runtime when using~\continmax\citep{DuSonZhaGom13} as subroutine to estimate the influence. We can precompute the data structures and store the samples needed to estimate the influence function in advance. Therefore, we focus only on the runtime for the constrained influence maximization algorithm.
\begin{figure}[t]
 \centering
 \renewcommand{\tabcolsep}{5pt}
 \begin{tabular}{ccc}
\includegraphics[width=0.3\columnwidth]{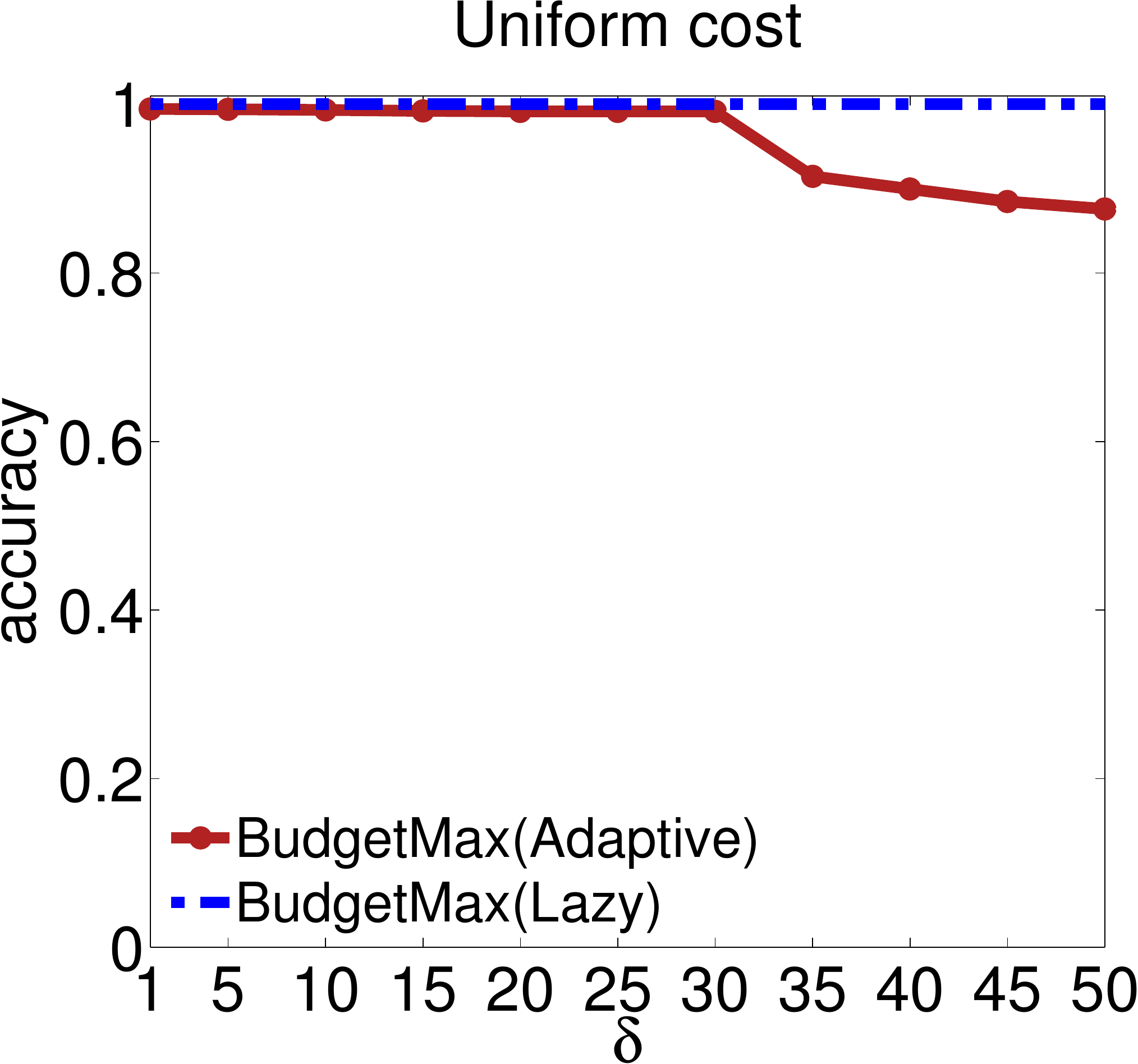}
& \hspace{1cm} 
&\includegraphics[width=0.3\columnwidth]{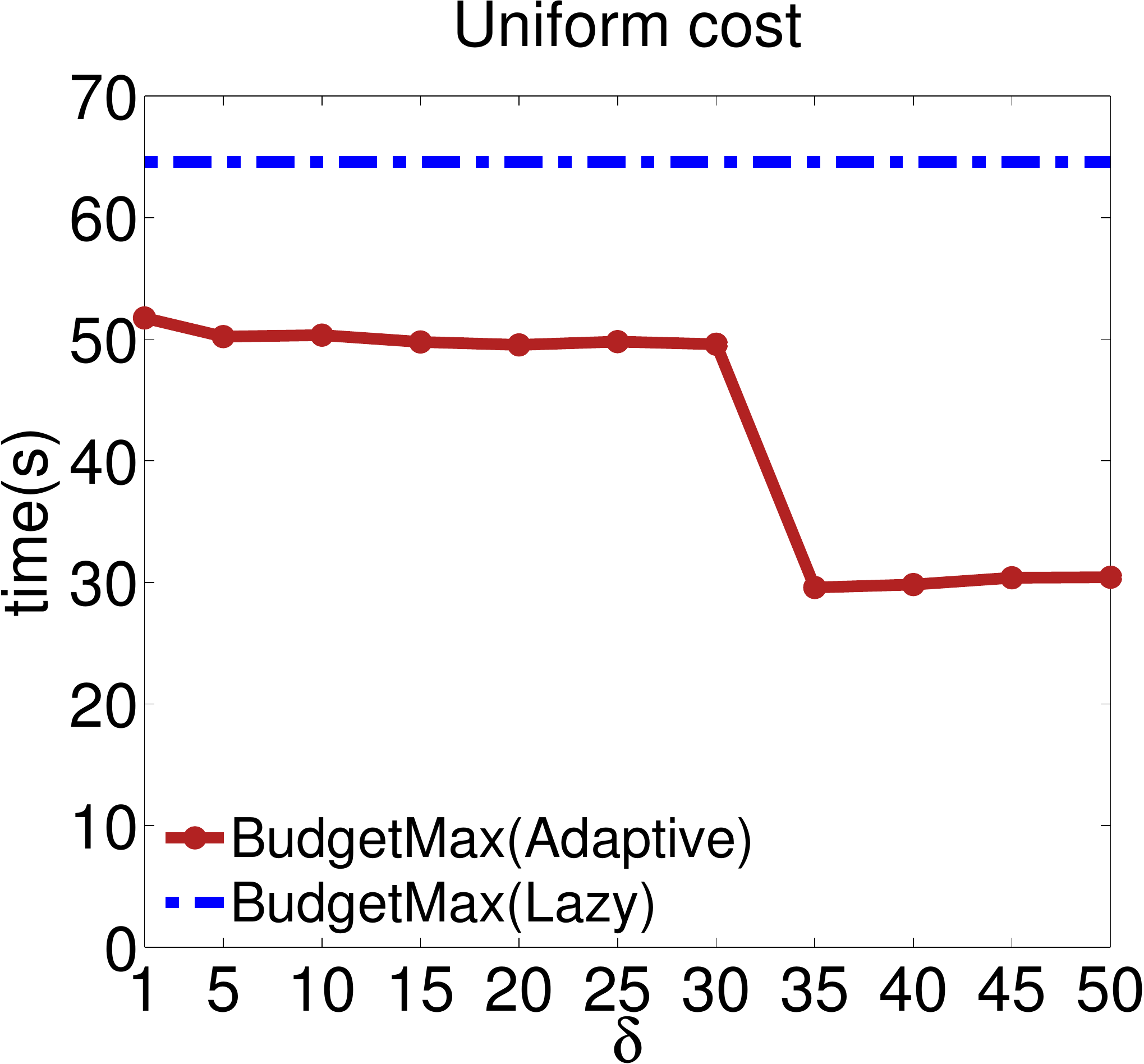} \\
(a) $\delta$ vs. accuracy & & (b) $\delta$ vs. time
\end{tabular}
 \caption{\label{speed-syn}
The relative accuracy and the run-time for different threshold parameter $\delta$.}
\end{figure}
\budgetmax runs on 64 cores of 2.4Ghz by using OpenMP to accelerate the first round of the optimization.
We report the allocation time for increasing number of products in Figure~\ref{inf-syn}(d), which clearly shows a linear time complexity with respect to the size of the ground set. Figure~\ref{inf-syn}(e) evaluates the runtime of allocation by varying the size of the network from 16,384 to 1,048,576 nodes.
We can see that \budgetmax~can scale up to millions of nodes.

\noindent{\bf Effects of Adaptive Thresholding}. In Figure~\ref{speed-syn}(a), we compare our adaptive thresholding algorithm to the lazy evaluation method. We plot the achieved influence value by different threshold $\delta$ relative to that achieved by the lazy evaluation method. Since the lazy evaluation method does not depend on the parameter, it is always 1 shown by the blue line. We can see that as $\delta$ increases, the accuracy will decrease. However, the performance is robust to $\delta$ in the sense that we can still keep 90-percent relative accuracy even if we use large $\delta$. Finally, in Figure~\ref{speed-syn}(b), we show that as $\delta$ increases, the runtime can be significantly reduced. Thus, Figure~\ref{speed-syn} verifies the intuition that  $\delta$ is able to trade off the solution quality of the allocation with the runtime. The larger $\delta$ becomes, the shorter the runtime will be, at the cost of  reduced allocation quality.

\subsubsection{Influence Maximization with Non-Uniform Costs}

\noindent{\bf User-cost and product-budget generation}. Our designing of user-cost mimics the real scenario where advertisers pay much more money to celebrities with millions of social network followers by letting $c_i\propto d_i^{-n}$ where $c_i$ is the cost, $d_i$ is the degree, and $n\geqslant 1$ controls the increasing speed of cost w.r.t degree. In our experiments, we use $n=3$ and normalize $c_i$ to be within $[0,1]$. Then, the product-budget consists of a base value from 1 to 10 with a random adjustment uniformly chosen from 0 to 1.

\noindent{\bf Competitors}. Because users now have non-uniform costs, \gdegree should take both degree and the corresponding cost into consideration. Hence, we sort the list of all pairs of product $i$ and node $j\in\Vcal_S$ in the descending order of the degree-cost ratio $d_j/c_j$ in the corresponding diffusion networks to select the most cost-effective pairs. In addition, if we allow the target users to be partitioned into distinct groups (or communities), we can also allow the group-specific allocation. In particular, we may pick the most cost-effective pairs within each group locally instead, which is referred to as \gdegree(local).

\noindent{\bf Influence Maximization}. We denote our experimental settings by the tuple (\#products, product-budget, user-constrain, T). In Figure~\ref{inf-budget-syn}(a-d), we investigate the relation between the estimated influence and one of the above four factors while fixing the others constant each time. In all cases, \budgetmax significantly outperforms the other methods, and the achieved influence increases monotonically. 

\noindent{\bf Group limits}. In Figure~\ref{inf-budget-syn}(e), we study the effect of the Laminar matroid combined with group knapsack constraints, which is the most general type of constraint we handle in this paper. The selected target users are further partitioned into $K$ groups randomly, each of which has $Q_i,i = 1\dotso K$ limit which constrains the maximum allocations allowed in each group. In practical scenarios, each group might correspond to a geographical community or organization. In our experiment, we divide the users into 8 equal-size groups and set $Q_i = 16,i = 1\dotso K$ to indicate that we want a balanced allocation in each group. Figure~\ref{inf-budget-syn}(e) shows the estimated influence with respect to the user-constraint. In contrast to Figure~\ref{inf-budget-syn}(b), as we increase the user-constraint by giving more slots to each user, the total estimated influence keeps almost constant. This is because although the total number of available slots in each group increases, the group limit does not change. As a consequence, we still cannot make more allocations to increase the total influence.

\begin{figure*}[t]
 \centering
 \renewcommand{\tabcolsep}{0pt}
 \begin{tabular}{ccccc}
\includegraphics[width=0.2\textwidth]{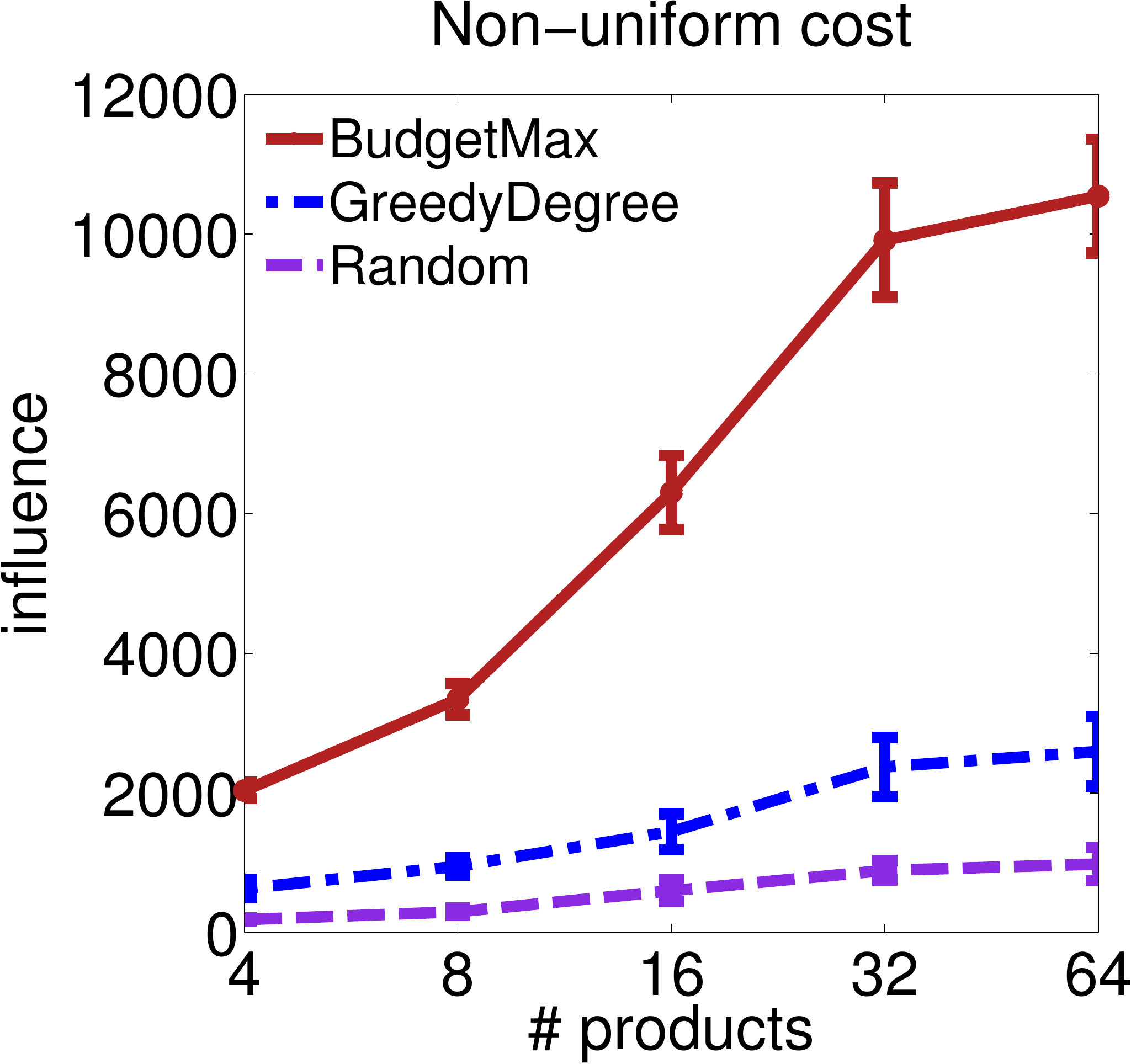} &
\includegraphics[width=0.2\textwidth]{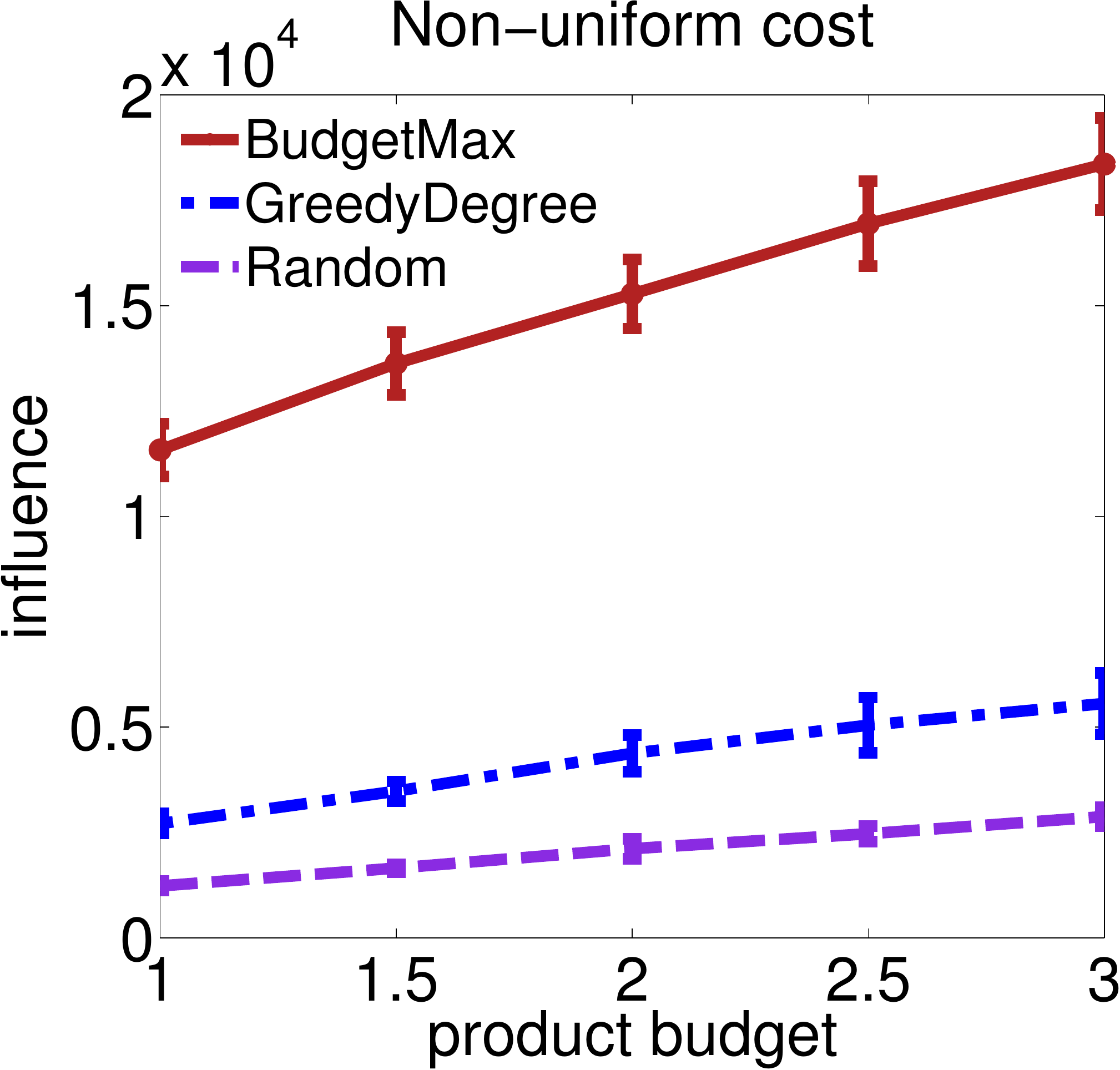} &
\includegraphics[width=0.2\textwidth]{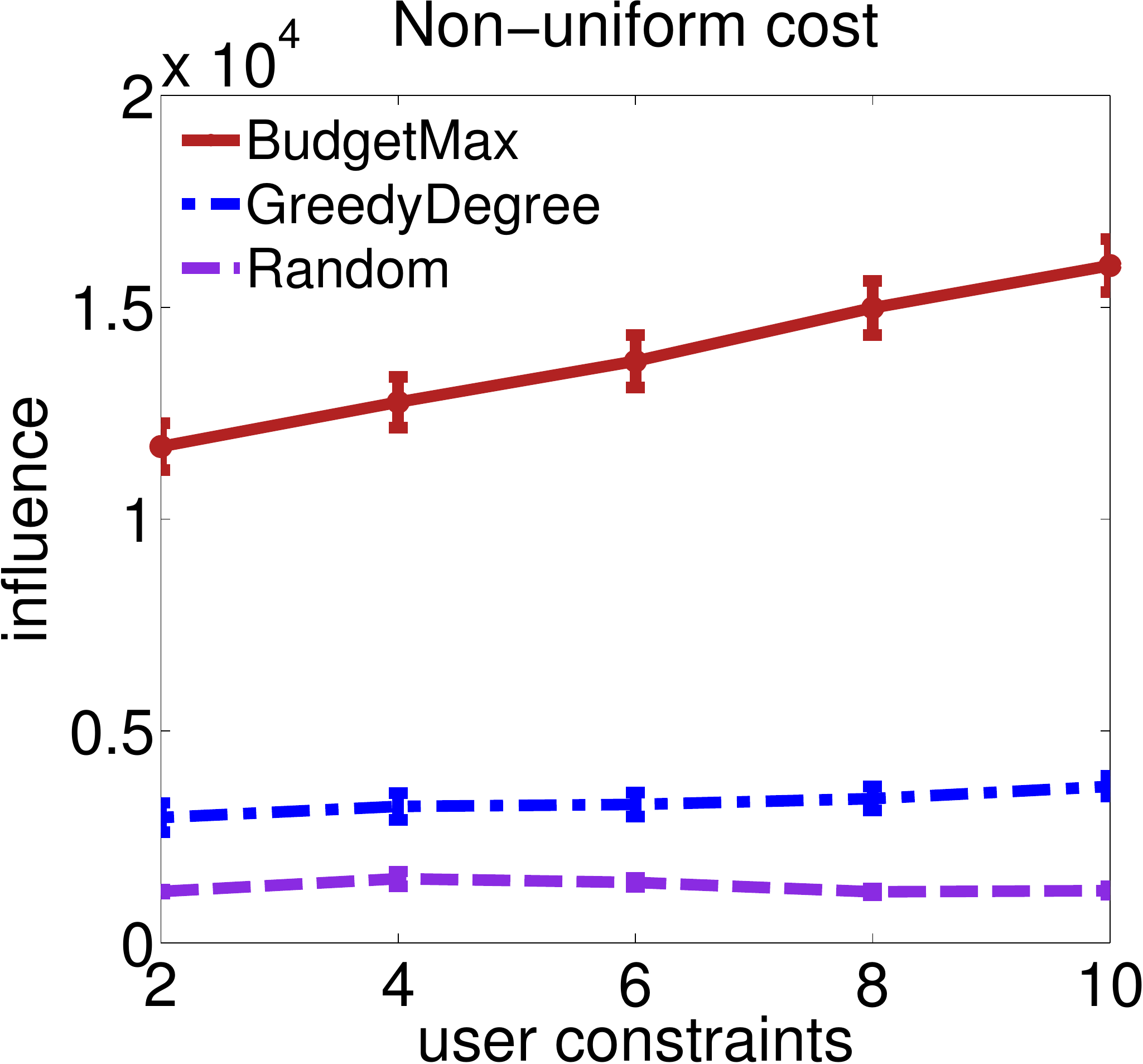} &
\includegraphics[width=0.2\textwidth]{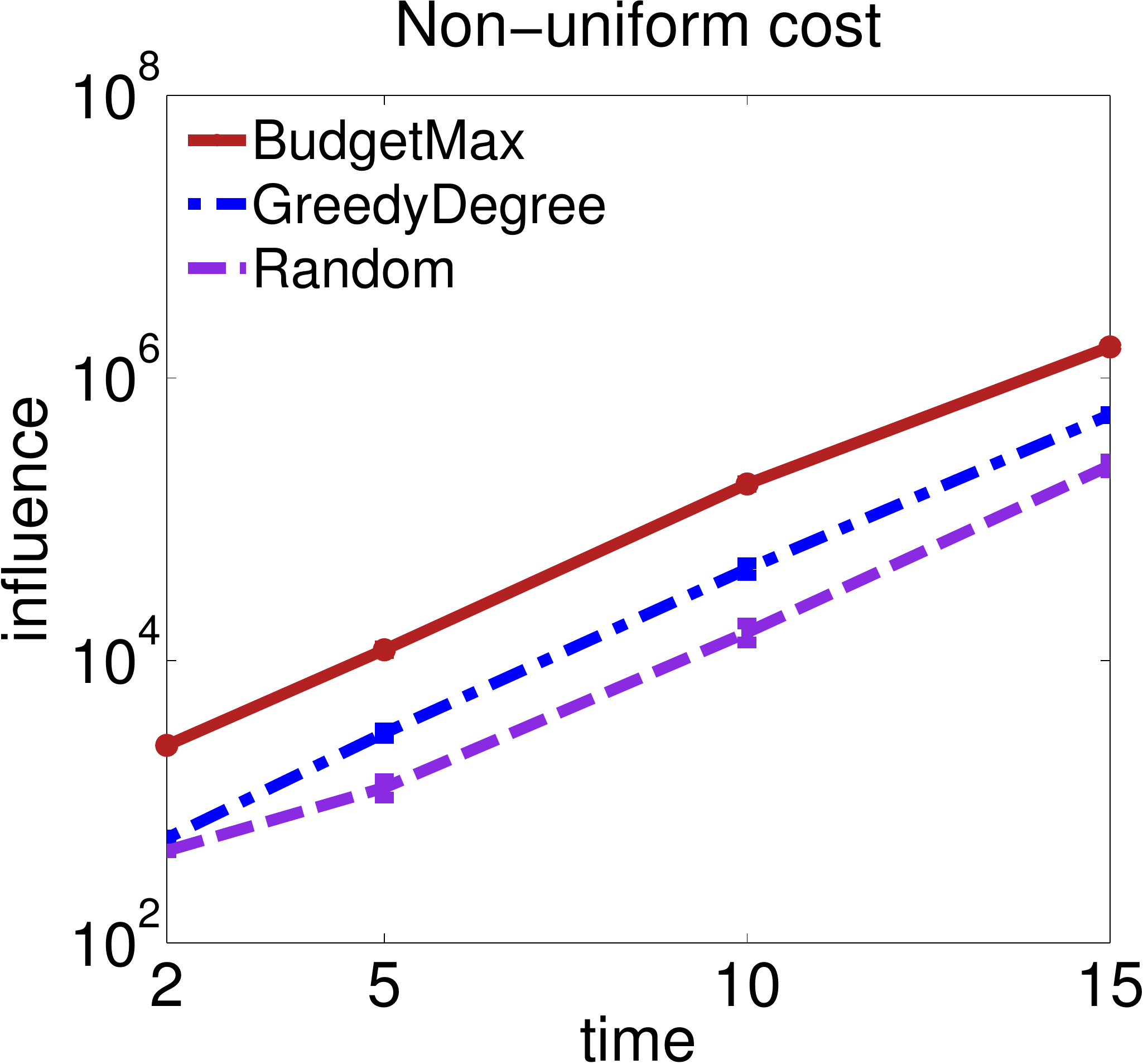} &
\includegraphics[width=0.2\textwidth]{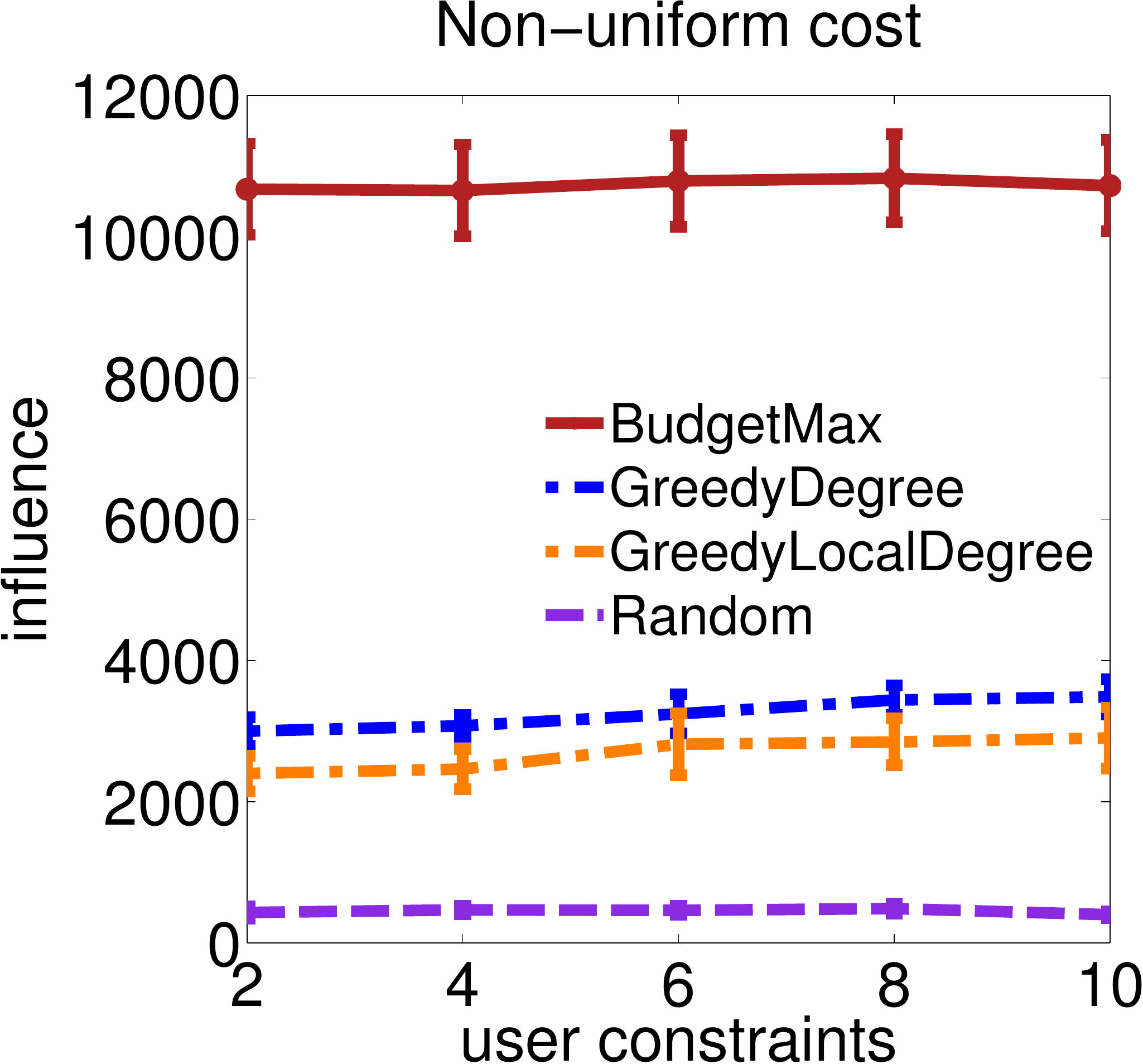} \\
(a) By products & (b) By product budgets & (c) By user constraints & (d) By time & (e) By group limits
\end{tabular}
 \caption{\label{inf-budget-syn} Over the 64 product-specific diffusion networks, each of which has a total 1,048,576 nodes, the estimated influence (a) for increasing the number of products by fixing the product-budget at 1.0 and user-constraint at 2; (b) for increasing product-budget by fixing user-constraint at 2; (c) for increasing user-constraint by fixing product-budget at 1.0; (d) for different time window T; and (e) for increasing user-constraint with group-limit 16 by fixing product-budget at 1.0.}
\end{figure*}

\subsection{Real-world Data}

Finally, we investigate the allocation quality on real-world datasets. The MemeTracker data contains 300 million blog posts and articles collected for the top 5,000 most active media sites from four million websites between March 2011 and February 2012~\citep{RodLesSch13}. The flow of information was traced using quotes which are short textual phrases spreading through the websites. Because all published documents containing a particular quote are time-stamped, a cascade induced by the same quote is a collection of times when the media site first mentioned it. The dataset is divided into groups, each of which consists of cascades built from quotes that were mentioned in posts containing a particular keyword. We have selected 64 groups with at least 100,000 cascades as our products, which include many well-known events such as `apple and jobs', `tsunami earthquake', `william kate marriage', `occupy wall-street', etc.

\noindent{\bf Learning diffusion networks.} On the real-world datasets, we have no prior-knowledge about the diffusion network structure of each meme. The only information we have is the time stamp at which each meme was forwarded in each cascade, so this setting is much more challenging than that of the synthetic experiments. We evenly split the data into the training and testing sets.  On the training set, we first learn each diffusion network by assuming exponential pairwise transmission functions ~\citep{GomBalSch11} for simplicity, although our method can be trivially adapted to the more sophisticated learning algorithms~\citep{DuSonSmoYua12, DuSonWooZha13}. Meanwhile, we also infer the diffusion network structures by fitting the classic discrete-time independent cascade model where the pairwise infection probability is learned based on the method of~\citep{NetPraSanSuj12}, and the step-length is set to one. Then, we can optimize the allocation by running our greedy algorithm over these inferred networks assuming the discrete-time diffusion model. We refer to this implementation as the Greedy(discrete) method. Moreover, because we also have no ground-truth information about cost of each node, we focus on the uniform-cost case, specifically.

\noindent{\bf Influence maximization.} After we find an allocation over the learned networks, we evaluate the performance of the two methods on the held-out testing cascades as follows : given an product-node pair $(i,j)$, let $\Ccal(j)$ denote the set of cascades induced by product $i$ that contains node $j$. The average number of nodes coming after $j$ for all the cascades in $\Ccal(j)$ is treated as the average influence by assigning product $i$ to node $j$. Therefore, the influence of an allocation is just the sum of the average influence of each product-node pair in the solution. Because we have 64 representative products, in order to motivate the competitions to the available allocation slots, we randomly select 128 nodes as our target users. Figure~\ref{inf-real} presents the evaluated results by varying the number of products (a), product constraints (b), user-constraints (c) and the observation window $T$, respectively.  It clearly demonstrates that \budgetmax can find an allocation that indeed induces the largest diffusions contained in the testing data with an average $20$-percent improvement overall.

\noindent{\bf Visualization.} We further plot part of the allocation in Figure~\ref{demo} to get a qualitative intuition about the solution where the red representative memes are assigned to the respective media-sites. For example, `tsunami earthquake' is assigned to `japantoday.com', `wall-street-occupy' is assigned to `finance.yahoo.com', etc. Moreover, because different memes can have diverse diffusion networks with heterogeneous pairwise transmission function, the selected nodes are thus the ones that can invoke faster potential spreading for one or several memes along time, which include a few very popular media sites such as nytimes.com, cnn.com and several modest sites~\citep{BakHofMasWat11} such as freep.com, localnews8.com, etc.

\begin{figure*}[t]
 \centering
 \renewcommand{\tabcolsep}{2pt}
 \begin{tabular}{cccc}
\includegraphics[width=0.22\textwidth]{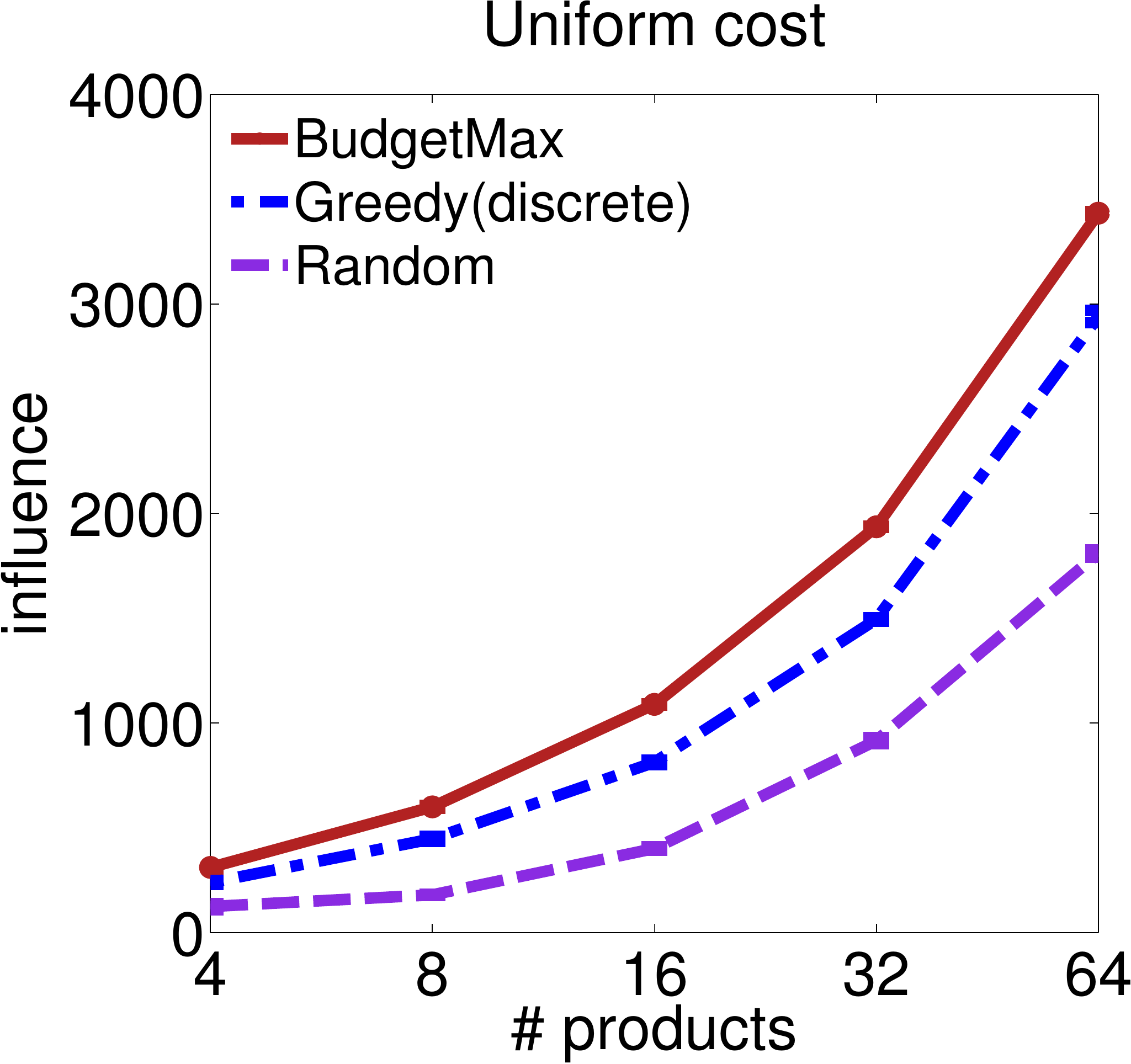}
& \includegraphics[width=0.22\textwidth]{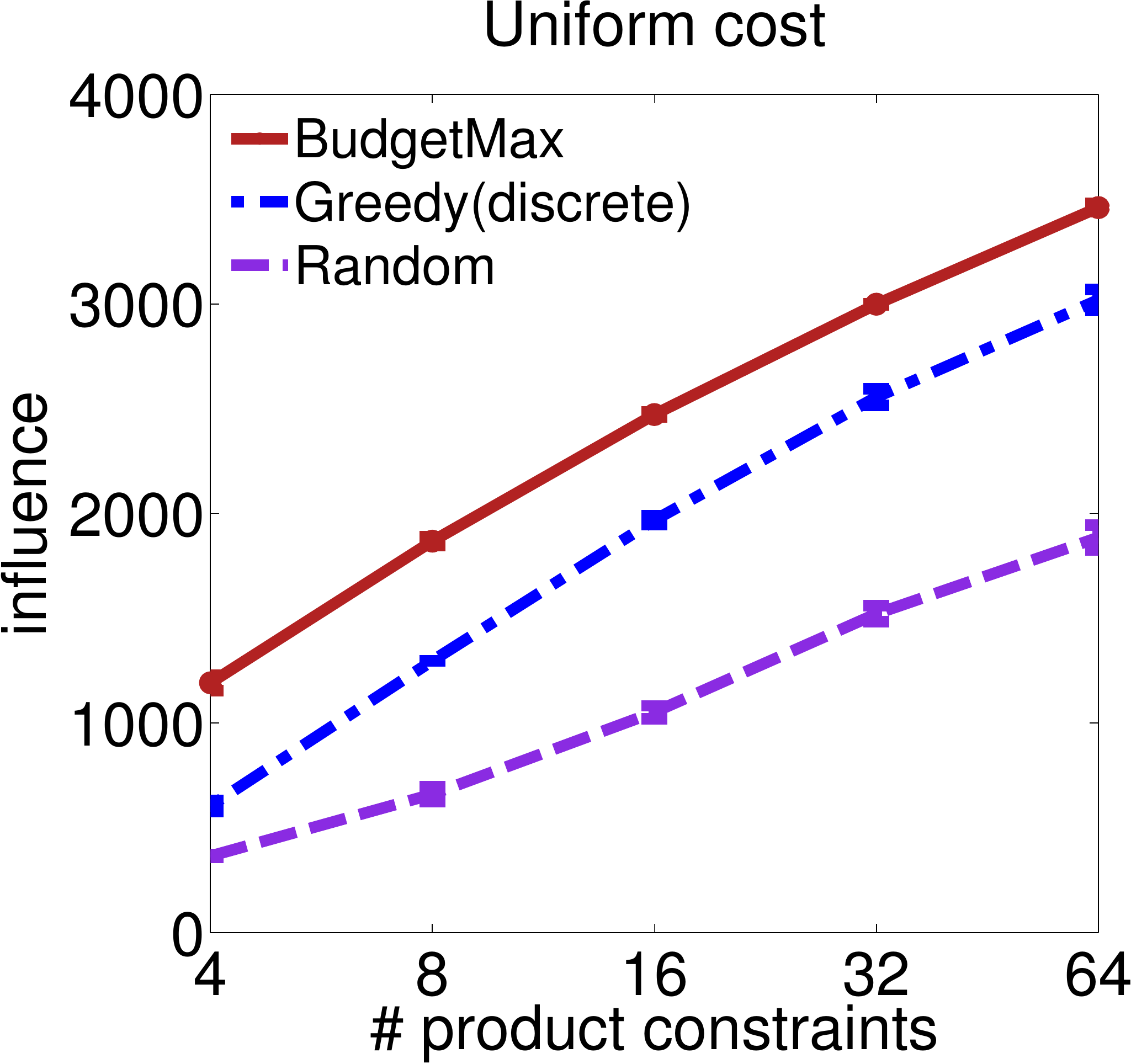}
& \includegraphics[width=0.22\textwidth]{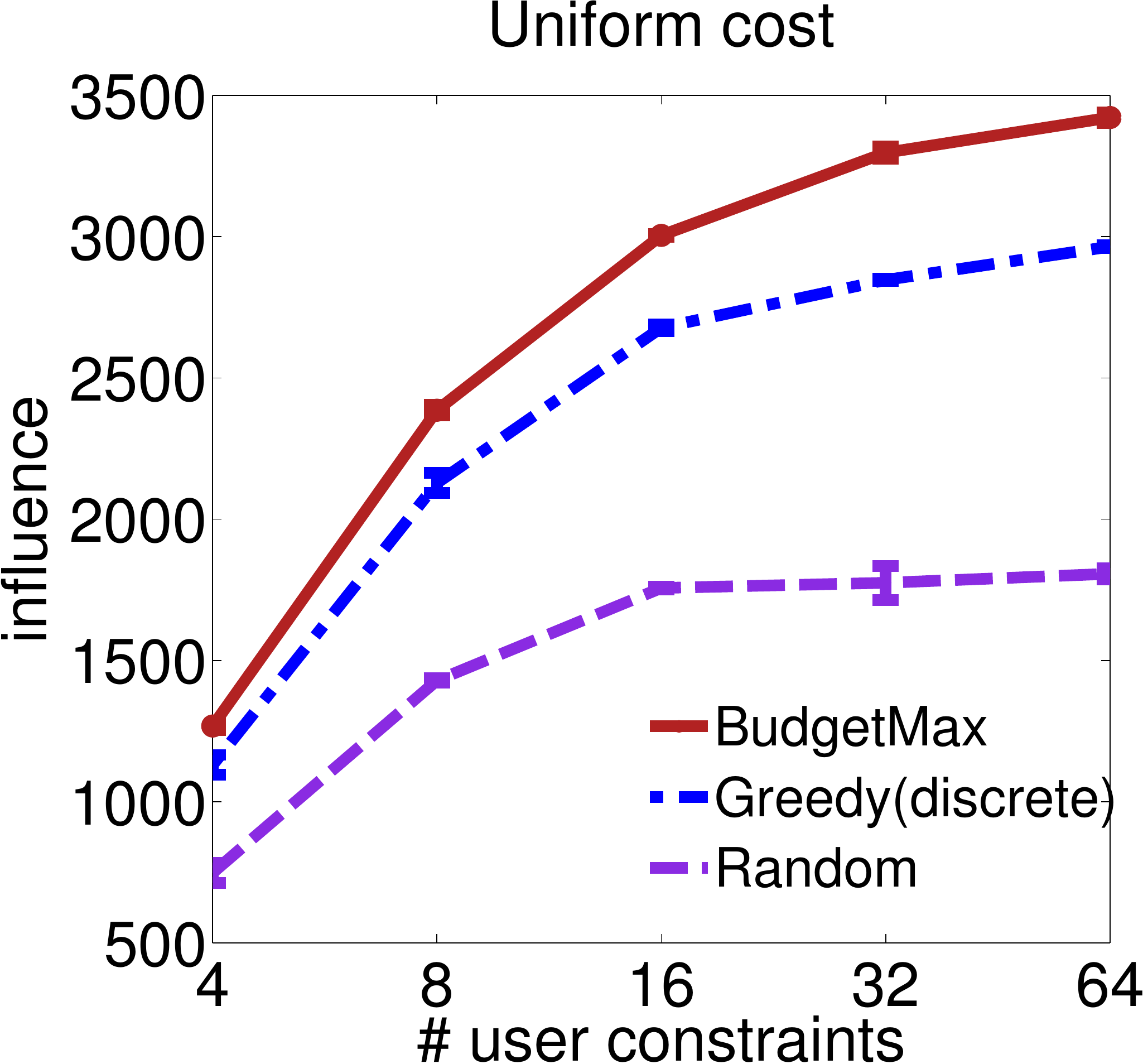}
& \includegraphics[width=0.22\textwidth]{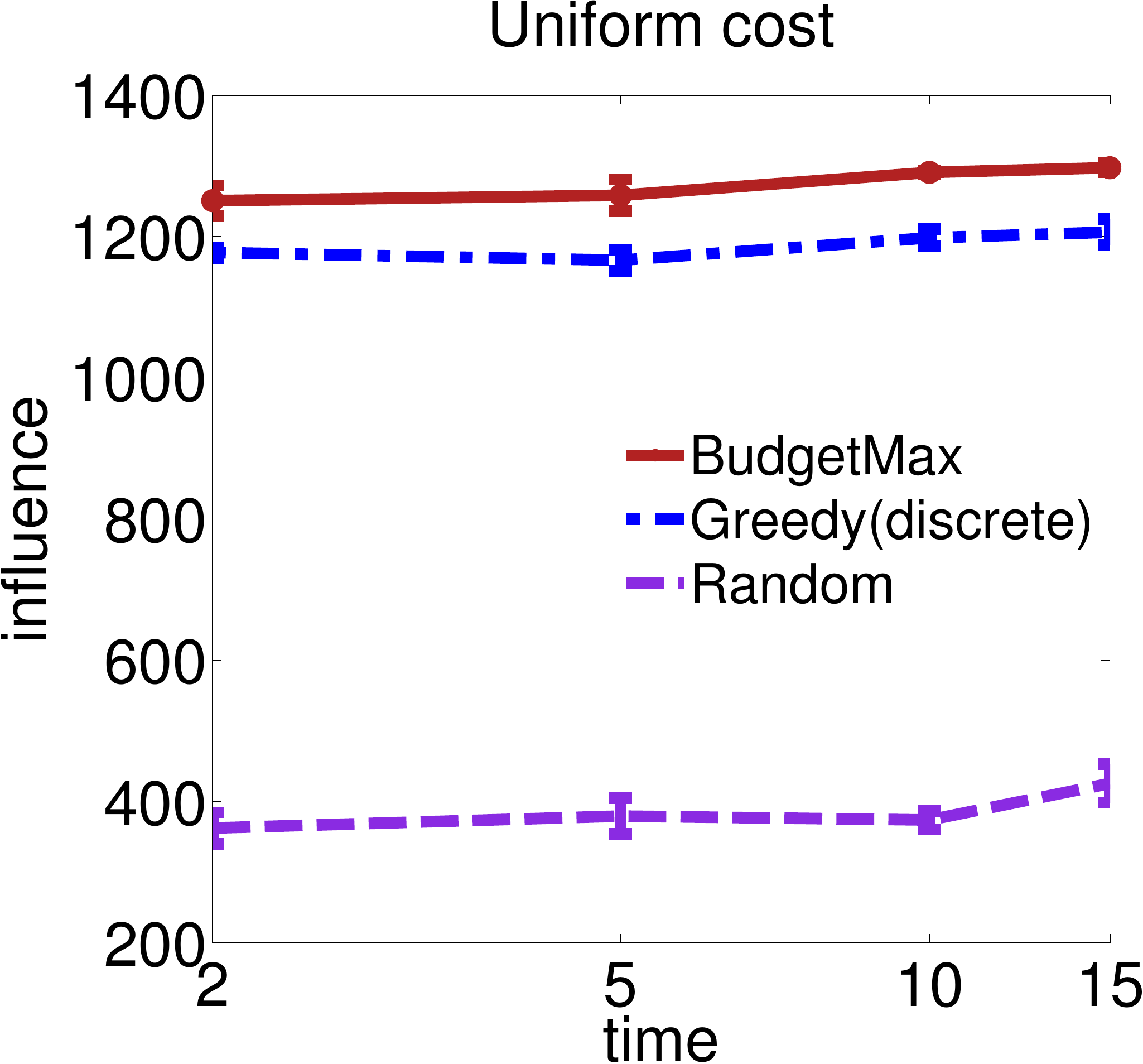} \\
(a) By products & (b)  By product constraints & (c) By user constraints & (d) By time
\end{tabular}
 \caption{\label{inf-real}  Over the inferred 64 product-specific diffusion networks, the true influence estimated from separated testing data (a) for increasing the number of products by fixing the product-constraint at 8 and user-constraint at 2; (b) for increasing product-constraint by fixing user-constraint at 2; (c) for increasing user-constraint by fixing product-constraint at 8; (d) for different time window T.}
\end{figure*}

\begin{figure}[t]
 \centering
 \begin{tabular}{c}
\includegraphics[width=0.4\columnwidth]{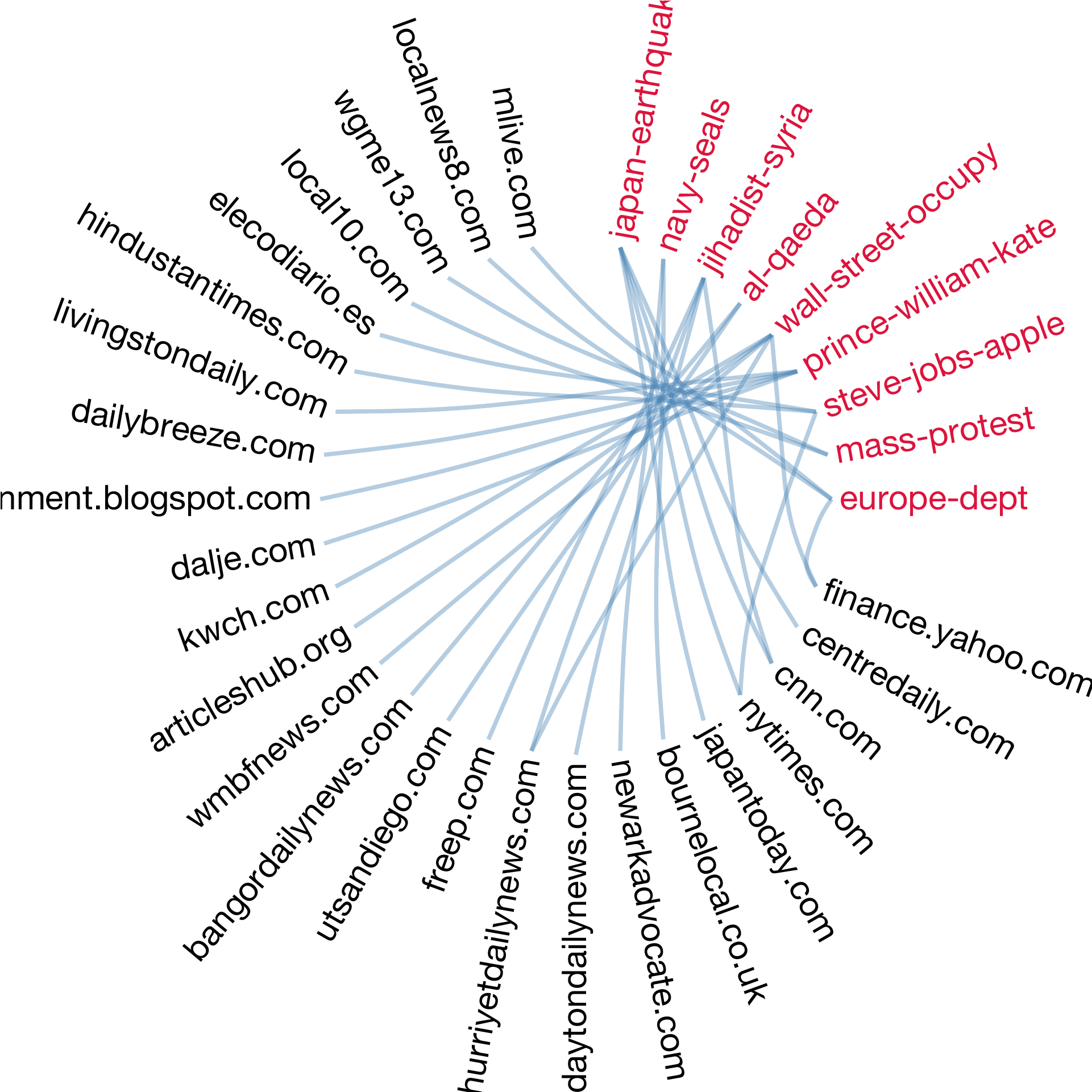}
\end{tabular}
 \caption{\label{demo}
The allocation of memes to media sites.}
\end{figure}

\vspace{-2mm}
\section{Conclusion}
We study the problem of maximizing the influence of multiple types of products  (or information) in realistic continuous-time diffusion networks, subject to various constraints: different products can have different diffusion structures; only influence within given time windows is considered; each user can only be recommended to a small number of products; each product has a limited budget and assigning it to users has costs. We provide a novel formulation as a submodular maximization under an intersection of matroid constraints and group-knapsack constraints, and then design an efficient adaptive threshold greedy algorithm with provable approximation guarantees. Experiment results show that the proposed algorithm performs significantly better than other scalable alternatives in both synthetic and real world datasets. 
\vspace{-2mm}

\bibliographystyle{plainnat}
\bibliography{bibfile}

\appendix

\section{Complete Proofs}

\subsection{Uniform Cost}

We first prove that a theorem for Problem~\ref{pro:infMax} with general normalized monotonic submodular function $f(S)$ and general $P$ (Theorem~\ref{thm:dtgreedy}) and $k=0$, and then specify the guarantee for our influence maximization problem (Theorem~\ref{thm:infMax_uni}).

Suppose $G=\cbr{g_1,\dots, g_{|G|}}$ in the order of selection, and let $G^t =\cbr{\g_1, \dots, \g_t}$.
Let $C_t$ denote all those elements in $O \setminus G$ that satisfy the following:
they are still feasible before selecting the $t$-th element $g_t$ but are infeasible after selecting $g_t$.
Formally, 
$$
	C_t=\cbr{z \in O \setminus G: \cbr{z} \cup G^{t-1} \in \Fcal, \cbr{z} \cup G^t \not\in \Fcal}.
$$

In the following, we will prove three claims and then use them to prove the theorems.
Recall that for any $i\in \Ground$ and $S \subseteq \Ground$, the marginal gain of $z$ with respect to $S$ is denoted as
$$
	f(z| S) := f(S\cup\cbr{z}) - f(S)
$$
and its approximation is denoted by 
$$
	\widehat f(z| S) = \widehat  f(S\cup\cbr{z}) - \widehat f(S).
$$
When $|f(S)- \widehat{f}(S)| \leqslant \epsilon$ for any $S \subseteq \Ground$, we have 
$$
	|\widehat f(z| S)  - f(z| S) | \leqslant 2\epsilon
$$ for any $z\in \Ground$ and $S \subseteq \Ground$.

\smallskip
\noindent
\textbf{Claim~\ref{cla:size}. }
{\it
$\sum_{i=1}^t |C_i| \leqslant P t$, for $t =1, \dots, |G|$.
}

\begin{proof}
We first show the following property about matroids: for any $Q \subseteq \Ground$, the sizes of any two maximal independent subsets $T_1$ and $T_2$ of $Q$
can only differ by a multiplicative factor at most $P$. 
Here, $T$ is a maximal independent subset of $Q$ if and only if:
\begin{itemize}[itemsep=0pt]
\item $T \subseteq Q$;
\item $T \in \Fcal = \bigcap_{i=1}^P \Ical_p$;
\item $T \cup \cbr{z} \not \in \Fcal$ for any $z \in Q \setminus T$.
\end{itemize}

To prove the property, note that for any element $ z \in T_1 \setminus T_2$,
$\cbr{z} \cup T_2$ violates at least one of the matroid constraints since $T_2$ is maximal.
Let $V_i (1 \leqslant i \leqslant P)$ denote all elements in $T_1 \setminus T_2$ that violates the $i$-th matroid,
and then partition $T_1 \cap T_2$ arbitrarily among these $V_i$'s so that they cover $T_1$.
Note that the size of each $V_i$ must be at most that of $T_2$,
since otherwise by the Exchange axiom, there would exist $z \in V_i \setminus T_2$ that
can be added to $T_2$ without violating the $i$-th matroid, which is contradictory to the construction.
Therefore, the size of $T_1$ is at most $P$ times that of $|T_2|$.

Now we apply the property to prove the claim.
let $Q$ be the union of $G^{t}$ and $\bigcup_{i=1}^t C_t$.
On one hand, $G^{t}$ is a maximal independent subset of $Q$, since no element in $\bigcup_{i=1}^t C_t$ can be added to $G^t$ without violating the matroid constraints.
On the other hand, $\bigcup_{i=1}^t C_t$ is an independent subset of $Q$, since it is part of the optimal solution.
Therefore, $\bigcup_{i=1}^t C_t$ has size at most $P$ times $|G^t|$, which is $Pt$.
\end{proof}

\smallskip
\noindent
\textbf{Claim~\ref{cla:gain}. }
{\it
Suppose $g_t$ is selected at the threshold $\tau_t$. 
$f(j|G^{t-1}) \leqslant (1+\delta) \tau_t + 4\epsilon + \frac{\delta}{\nGround} f(G), \forall j \in C_t$.
}

\begin{proof}
First, consider $\tau_t > w_{L+1} = 0$.
We clearly have $\widehat f( \g_t | \Greedy^{t-1}) \geqslant \tau_t$ and thus $f( \g_t | \Greedy^{t-1}) \geqslant \tau_t - 2\epsilon$.
For each $j \in C_t$, if $j$ were considered at a stage earlier,
than it would have been added to $\Greedy$ since adding it to $\Greedy^{t-1}$ will not violate the constraint.
However, $j \not\in \Greedy^{t-1}$, so $\widehat f( j| \Greedy^{t-1}) \leqslant (1+\delta) \tau_t$.
Then 
$$
	f(j|\Greedy^{t-1}) \leqslant (1+\delta) \tau_t + 2\epsilon.
$$	

Next, consider $\tau_t = w_{L+1} = 0$.
For each $j \in C_t$, we have $\widehat f(j|\Greedy) < \frac{\delta }{\nGround} d$.
In fact, by greedy selection we have the first element $g_1$ is of value $\widehat f(g_1) = d$, so $d \leqslant f(g_1) + \epsilon$. Then
$$
	f(j|\Greedy) < \frac{\delta }{\nGround} f(G) + 4\epsilon.
$$
The claim follows by combining the two cases.
\end{proof}

\smallskip
\noindent
\textbf{Claim~\ref{cla:com}. }
{\it
The marginal gain of $O\setminus G$ satisfies
$$\sum_{j \in O\setminus G} f(j|\Greedy) \leqslant [(1+\delta) P + \delta] f(G)  + (6+2\delta) \epsilon P |G|. $$
}

\begin{proof}
Combining Claim~\ref{cla:size} and Claim~\ref{cla:gain}, we have
\begin{align*}
 \sum_{j \in O\setminus G} f(j|\Greedy) = \sum_{t=1}^{|G|} \sum_{j \in C_t} f(j|\Greedy) 
& \leqslant (1+\delta) \sum_{t=1}^{|G|} |C_t| \tau_t + \delta f(G)  + 4\epsilon \sum_{t=1}^{|G|} |C_t|\\
& \leqslant (1+\delta) \sum_{t=1}^{|G|} |C_t| \tau_t + \delta f(G)  + 4\epsilon P |G|.
\end{align*}
The term $\sum_{t=1}^{|G|} |C_t| \tau_t \leqslant P \sum_{t=1}^{|G|} \tau_t$ by Claim~\ref{cla:size} and a technical lemma (Lemma~\ref{lem:seqsum}).
The claim follows from the fact that $f(G) = \sum_t f(g_t|G^{t-1}) \geqslant \sum_t (\tau_t - 2\epsilon)$.
\end{proof}

\begin{lemma}\label{lem:seqsum}
If $\sum_{i=1}^t \sigma_{i-1} \leqslant t$ for $t=1,\dots, K$ and $\rho_{i-1} \geqslant \rho_i$ for $i=1,\dots,K-1$ with $\rho_i, \sigma_i\geqslant 0$,
then $\sum_{i=1}^K \rho_i\sigma_i \leqslant \sum_{i=1}^K \rho_{i-1}$.
\end{lemma}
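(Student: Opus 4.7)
The plan is to prove the lemma via summation by parts (Abel summation), exploiting the fact that the hypothesis $\sum_{i=1}^t \sigma_{i-1} \leqslant t$ is precisely the statement that the cumulative sums of $(\sigma_0,\dots,\sigma_{K-1})$ are dominated by those of the constant sequence $(1,1,\dots,1)$. Since $\rho_0 \geqslant \rho_1 \geqslant \cdots \geqslant \rho_{K-1}$ is non-increasing, such ``majorization'' of cumulative sums should imply the desired inequality $\sum_{i=1}^K \rho_{i-1}\sigma_{i-1} \leqslant \sum_{i=1}^K \rho_{i-1}$. I read the printed conclusion $\sum \rho_i\sigma_i$ as an index-shift typo, since $\rho_K,\sigma_K$ are never constrained; the argument is identical up to relabeling.

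Concretely, I would introduce the partial sums $B_t := \sum_{i=1}^t \sigma_{i-1}$ with $B_0 = 0$, so that $\sigma_{i-1} = B_i - B_{i-1}$. Substituting and regrouping gives the summation-by-parts identity
\begin{align*}
\sum_{i=1}^K \rho_{i-1}\sigma_{i-1} \;=\; \rho_{K-1} B_K \,+\, \sum_{i=1}^{K-1} (\rho_{i-1} - \rho_i)\, B_i.
\end{align*}
Since $\rho_{i-1} - \rho_i \geqslant 0$ by monotonicity, $B_i \leqslant i$ by hypothesis, and $\rho_{K-1} \geqslant 0$, I would bound each $B_i$ termwise by $i$ to obtain
\begin{align*}
\sum_{i=1}^K \rho_{i-1}\sigma_{i-1} \;\leqslant\; \rho_{K-1} K \,+\, \sum_{i=1}^{K-1} (\rho_{i-1} - \rho_i)\, i.
\end{align*}

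The final step is to recognize that the right-hand side above is exactly what the same Abel identity produces when applied to the constant sequence $\sigma_{i-1}\equiv 1$ (for which $B_t = t$ and the hypothesis holds with equality); hence it equals $\sum_{i=1}^K \rho_{i-1}\cdot 1 = \sum_{i=1}^K \rho_{i-1}$, which is the desired bound. Alternatively, one can expand the telescope $\sum_{i=1}^{K-1} i(\rho_{i-1}-\rho_i) + K\rho_{K-1}$ by reindexing to get $\rho_0 + \rho_1 + \cdots + \rho_{K-1}$ directly. There is no real technical obstacle; the only points to watch are the direction of the inequality $(\rho_{i-1}-\rho_i)\geqslant 0$, which is where the non-increasing assumption on $\rho$ enters critically, and the boundary term $\rho_{K-1} B_K$, which must be carried along so that the telescoping collapses cleanly rather than leaving a leftover.
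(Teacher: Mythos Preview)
Your proof is correct and your diagnosis of the index shift is right: the lemma is applied in the paper with $\rho_{i-1}=\tau_i$ and $\sigma_{i-1}=|C_i|/P$, so the intended left-hand side is indeed $\sum_{i=1}^K \rho_{i-1}\sigma_{i-1}$.

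The paper takes a different route: it casts the inequality as a linear program, maximizing $\sum \rho_{i-1}\sigma_{i-1}$ over all nonnegative $\sigma$ satisfying the prefix-sum constraints, and then bounds the primal by exhibiting a dual-feasible solution $u_i=\rho_i-\rho_{i+1}$ (with $\rho_K:=0$) whose objective value telescopes to $\sum_{i=1}^K \rho_{i-1}$. Your Abel summation is the elementary unwinding of exactly this: the differences $\rho_{i-1}-\rho_i$ that you place on the partial sums $B_i$ are precisely the paper's dual variables, and your final telescoping identity is the computation of the dual objective. So the two arguments are essentially dual to each other, but yours is more self-contained (no LP machinery needed), while the paper's framing makes transparent that the bound is actually the worst case over all admissible $\sigma$, not just the particular one at hand.
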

\begin{proof}
Consider the linear program
\begin{eqnarray*}
V&=&\max_{\sigma} \sum_{i=1}^K \rho_i\sigma_i\\
& \textrm{s.t.} & \sum_{i=1}^t \sigma_{i-1} \leqslant t, \ \ t=1,\dots,K,\\
&& \sigma_i\geqslant 0, \ \ i=1,\dots,K-1
\end{eqnarray*}
with dual
\begin{eqnarray*}
W&=&\min_{u} \sum_{i=1}^K t u_{t-1}\\
& \textrm{s.t.} & \sum_{t=i}^{K-1} u_t \geqslant \rho_i, \ \ i=0,\dots,K-1,\\
&& u_t\geqslant 0, \ \ t=0,\dots,K-1.
\end{eqnarray*}
As $\rho_i \geqslant \rho_{i+1}$, the solution $u_i = \rho_i - \rho_{i+1},i=0,\dots,K-1$ (where $\rho_K=0$)
is dual feasible with value $\sum_{t=1}^K t (\rho_{t-1}-\rho_t) = \sum_{i=1}^{K} \rho_{i-1}$.
By weak linear programming duality,  $\sum_{i=1}^K \rho_i\sigma_i \leqslant V \leqslant W \leqslant \sum_{i=1}^K \rho_{i-1}$.
\end{proof}

\begin{theorem}\label{thm:dtgreedy}
For Problem~\ref{pro:infMax} with $k=0$, suppose Algorithm~\ref{alg:greedyFixedDensity} uses $\rho=0$ and $\widehat f$ to estimate the function $f$
which satisfies $|\widehat f(S) - f(S)| \leqslant \epsilon$ for all $S \subseteq \Ground$.
Then it uses $\Ocal(\frac{\nGround}{\delta}\log\frac{\nGround}{\delta})$ evaluations of $\widehat f$,
and returns a greedy solution $\Greedy$ with
$$f(\Greedy) \geqslant \frac{1}{(1+2\delta)(P+1)} f(\Optimal) - \frac{4 P |\Greedy|}{P+\cur_f}\epsilon$$
where $O$ is the optimal solution.
\end{theorem}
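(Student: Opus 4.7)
The plan is to combine the three combinatorial facts already laid out in Section~\ref{sec:uni} (Claims~\ref{cla:size}--\ref{cla:com}) with submodularity and monotonicity to obtain the approximation ratio, and to analyze the number of oracle calls separately by counting thresholds.

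\textbf{Runtime.} Since $\rho = 0$, every singleton is admissible, so $d_\rho = d$. The outer threshold loop then ranges over $w_t = d/(1+\delta)^t$ until $w_t \leq \delta d/N$, which gives $L = \Ocal(\log_{1+\delta}(N/\delta)) = \Ocal\!\left(\tfrac{1}{\delta}\log\tfrac{N}{\delta}\right)$ thresholds. At each threshold, the inner loop touches each element of $\Ground$ at most once, performing one marginal-gain evaluation of $\widehat f$, so the total number of calls to $\widehat f$ is $\Ocal\!\left(\tfrac{N}{\delta}\log\tfrac{N}{\delta}\right)$, matching the statement.

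\textbf{Approximation.} By monotonicity and submodularity of $f$,
\begin{equation*}
f(O) \;\leq\; f(O\cup G) \;\leq\; f(G) + \sum_{j \in O \setminus G} f(j\mid G).
\end{equation*}
Claim~\ref{cla:com} bounds the right-most sum by $[(1+\delta)P+\delta]\,f(G) + (6+2\delta)\,\epsilon\,P\,|G|$. Collecting $f(G)$ terms gives
\begin{equation*}
f(O) \;\leq\; (1+\delta)(P+1)\,f(G) + (6+2\delta)\,\epsilon\,P\,|G|,
\end{equation*}
and rearranging (using the crude estimate $1/(1+\delta) \geq 1/(1+2\delta)$) yields the lower bound on $f(G)$ stated in Theorem~\ref{thm:dtgreedy}, after absorbing the constants of the additive error into the advertised form.

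\textbf{Main obstacle.} The real work is packaged inside Claims~\ref{cla:size} and~\ref{cla:gain}: first, the matroid-intersection argument that $\bigl|\bigcup_{t'\leq t} C_{t'}\bigr| \leq Pt$ via comparing two maximal independent subsets using the Exchange axiom; second, the adaptive-threshold argument that any $j \in C_t$ has marginal gain at most $(1+\delta)\tau_t + O(\epsilon)$, with the delicate boundary case that when $g_t$ is selected at the floor threshold $w_{L+1}=0$ one must instead bound $f(j\mid G)$ by $\tfrac{\delta}{N}f(G) + O(\epsilon)$ using that the very first greedy pick satisfies $\widehat f(g_1) = d$. The hardest piece is stitching these together in Claim~\ref{cla:com} via the duality-style inequality $\sum_t |C_t|\tau_t \leq P\sum_t \tau_t$, which compares a weighted size sequence against an averaged, decreasing threshold sequence. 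Once those ingredients are in hand, the proof of the theorem reduces to the short algebra above, and the only remaining bookkeeping is to propagate the $\pm 2\epsilon$ slack through each use of $\widehat f(z\mid S)$ without inflating the additive error beyond the target form $\Ocal(\epsilon P|G|/(P+c_f))$.
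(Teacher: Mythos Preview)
Your proposal is correct and follows essentially the same route as the paper: you invoke submodularity to get $f(O)\leq f(G)+\sum_{j\in O\setminus G} f(j\mid G)$, apply Claim~\ref{cla:com}, and rearrange, while counting thresholds times per-threshold scans for the evaluation bound. The paper's own proof is precisely this two-line argument (plus the same runtime count), so your write-up matches it; your ``Main obstacle'' paragraph is a recap of the already-established Claims~\ref{cla:size}--\ref{cla:com} rather than additional work needed for this theorem.
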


\begin{proof}
By submodulairty and Claim~\ref{cla:com}, we have 
\begin{align*}
f(O) & \leqslant f(O\cup G) \leqslant f(G) + \sum_{j \in O\setminus G} f(j|\Greedy) \leqslant (1+\delta)(P + 1) f(G)  + (6+2\delta) \epsilon P |G|
\end{align*}
which leads to the bound in the theorem.

The number of evaluations is bounded by $\Ocal(\frac{\nGround}{\delta}\log\frac{\nGround}{\delta})$
since there are $\Ocal(\frac{1}{\delta}\log \frac{\nGround}{\delta})$ thresholds,
and there are $\Ocal(\nGround)$ evaluations at each threshold.
\end{proof}

Theorem~\ref{thm:dtgreedy} essentially shows $f(G)$ is close to $f(O)$ up to a factor roughly $(1+P)$, which then leads to the following guarantee for our influence maximization problem.Suppose product $i \in \Item$ spreads according to diffusion network $\Graph_i = (\Node, \Edge_i)$, and let $i^*=\argmax_{i\in\Item}|\Edge_i|$.

\smallskip
\noindent
\textbf{Theorem~\ref{thm:infMax_uni}. }
{\it
For influence maximization with uniform cost,  Algorithm~\ref{alg:greedyFixedDensity} (with $\rho=0$) outputs a solution $G$ with
$
	f(\Greedy) \geqslant \frac{1-2\delta}{3} f(\Optimal)
$
in expected time $\widetilde\Ocal\left(\frac{|\Edge_{i^*}|+|\Node|}{\delta^2}  + \frac{|\Item||\Node|}{\delta^3} \right).$
}

\begin{proof}
In the influence maximization problem, the number of matroids is $P=2$.
Also note that $|G| \leqslant f(G) \leqslant f(O)$, which leads to $4 |\Greedy|\epsilon \leqslant 4\epsilon f(O)$.
The approximation guarantee then follows from setting $\epsilon\leqslant \delta/16$ when using \continmax~\citep{DuSonZhaGom13} to estimate the influence.

The runtime is bounded as follows.
In Algorithm~\ref{alg:greedyFixedDensity}, we need to estimate the marginal gain of adding one more product to the current solution.
In \continmax~\citep{DuSonZhaGom13}, building the initial data structure takes time
$$
    \Ocal\left((|\Edge_{i^*}|\log|\Node| + |\Node|\log^2 |\Node|) \frac{1}{\delta^2} \log \frac{|\Node|}{\delta} \right)
$$
and afterwards each function evaluation takes time $$
    \Ocal\left(\frac{1}{\delta^2} \log \frac{|\Node|}{\delta} \log\log|\Node|\right).
$$
As there are $\Ocal\left(\frac{\nGround}{\delta}\log\frac{\nGround}{\delta}\right)$
evaluations where $N=|\Item||\Node|$, the runtime of our algorithm follows.
\end{proof}

\subsection{Non-uniform cost}

We first prove that a theorem for Problem~\ref{pro:infMax} with general normalized monotonic submodular function $f(S)$ and general $P$ (Theorem~\ref{thm:densityEnu}), and then specify the guarantee for our influence maximization problem (Theorem~\ref{thm:infMax}).

\begin{theorem}\label{thm:densityEnu}
Suppose Algorithm~\ref{alg:densityEnu} uses $\widehat f$ to estimate the function $f$
which satisfies $|\widehat f(S) - f(S)| \leqslant \epsilon$ for all $S \subseteq \Ground$.
There exists a $\rho$ such that
\begin{align*}
f(S_\rho) \geqslant \frac{\max\cbr{1,|A_\rho|} }{(P+2k+1)(1+2\delta)} f(O) - 8\epsilon |S_\rho|
\end{align*}
where $A_\rho$ is the set of active knapsack constraints.
\end{theorem}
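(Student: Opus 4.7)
The plan is to pick $\rho$ from the enumeration close to the ``magic'' threshold $\rho^* := \frac{2 f(O)}{P+2k+1}$. Since $d \leqslant f(O) \leqslant \nGround d$, the value $\rho^*$ lies in $[\frac{2d}{P+2k+1}, \frac{2\nGround d}{P+2k+1}]$, so the geometric enumeration (ratio $1+\delta$) contains a $\rho \in [\rho^*/(1+\delta), \rho^*]$. For this $\rho$, partition the elements of the optimum not taken by the greedy as
\begin{align*}
O_- := \cbr{z \in O\setminus S_\rho: f(z|S_\rho) < c(z)\rho + 2\epsilon}, \qquad O_+ := (O\setminus S_\rho) \setminus O_-.
\end{align*}
Elements in $O_-$ fail the density test, so $f(O_-|S_\rho) \leqslant \rho\, c(O_-) + \Ocal(\epsilon |S_\rho|) \leqslant k\rho + \Ocal(\epsilon |S_\rho|)$ using that $O$ is knapsack-feasible.

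For $O_+$, every element clears the density floor, so the reason it was skipped must come from either (a) matroid infeasibility or (b) the adaptive decreasing thresholds. This is structurally identical to the uniform-cost argument, so I would replay Claims~\ref{cla:size}--\ref{cla:com} with $O_+$ playing the role of $O\setminus G$, obtaining
\begin{align*}
f(O_+|S_\rho) \leqslant [(1+\delta)P + \delta]\, f(S_\rho) + \Ocal(\epsilon P |S_\rho|).
\end{align*}
Submodularity then gives $f(O) \leqslant f(S_\rho) + f(O_-|S_\rho) + f(O_+|S_\rho)$, and substituting $\rho = \rho^*/(1+\delta)$ and the two bounds above yields $f(S_\rho) \geqslant \frac{f(O)}{(P+2k+1)(1+2\delta)} - \Ocal(\epsilon |S_\rho|)$, which is the $\max\{1,|A_\rho|\}=1$ case of the theorem.

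To upgrade the numerator from $1$ to $|A_\rho|$ when some knapsacks are tight, I would look inside each active knapsack $i \in A_\rho$ separately. Let $G_i = S_\rho \cap \Ground_{i*}$ and let $z_i$ be the element whose attempted insertion first violated budget $i$ (so $c(G_i \cup \cbr{z_i}) > 1$). Since every element in $G_i \cup \cbr{z_i}$ satisfies $f(z|\cdot) \geqslant c(z)\rho$, summing marginal gains gives $f(G_i) + f(z_i|G_i) \geqslant \rho - \Ocal(\epsilon)$. The delicate step is to charge $f(z_i|G_i)$ back to $f(G_i)$: because the adaptive thresholds $w_t$ decrease geometrically and $z_i$ was available but never chosen before the last element of $G_i$, one shows $f(z_i|G_i) \leqslant (1+\delta) f(G_i) + \Ocal(\epsilon)$, from which $f(G_i) \gtrsim \rho/(2+\Ocal(\delta))$. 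Summing over the $k_a$ active knapsacks and using that the $G_i$ live on disjoint groups $\Ground_{i*}$ (so $f(S_\rho) \geqslant \sum_i f(G_i)$ by monotone submodularity across the partition) gives $f(S_\rho) \geqslant \frac{k_a f(O)}{(P+2k+1)(1+2\delta)} - \Ocal(\epsilon|S_\rho|)$.

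The main obstacle will be the bookkeeping in the active-knapsack argument: carefully pinning down at which threshold $z_i$ was considered, and why its marginal gain cannot dominate the marginal gains of the elements actually admitted into $G_i$, all while absorbing the $2\epsilon$ slack from inexact evaluation and the geometric $(1+\delta)$ slack from the threshold schedule. Getting the constants to collapse precisely to $(1+2\delta)$ and $8\epsilon|S_\rho|$ in the final inequality, rather than larger multiplicative and additive losses, is the finicky part.
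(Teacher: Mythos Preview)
Your proposal is essentially the paper's proof: same choice of $\rho^*=\frac{2f(O)}{P+2k+1}$, same $O_+/O_-$ split, same replay of Claims~\ref{cla:size}--\ref{cla:com} on $O_+$, and the same per-active-knapsack charging $f(z_i|G_i)\leqslant (1+\delta)f(G_i)+O(\epsilon)$ to turn $c(G_i\cup\{z_i\})>1$ into $f_i(G_i)\gtrsim \rho/2$.

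There is one genuine gap in the way you present the first half. You assert that for $z\in O_+$ ``the reason it was skipped must come from either (a) matroid infeasibility or (b) the adaptive decreasing thresholds,'' and then invoke the $C_t$-partition argument of Claims~\ref{cla:size}--\ref{cla:com}. This is only true when $|A_\rho|=0$. If some knapsack $i$ is active, an element $z\in O_+\cap \Ground_{i*}$ can be skipped because $G\cup\{z\}$ violates the $i$-th budget, not any matroid; such a $z$ does not fit into any $C_t$ in the sense required by Claim~\ref{cla:size} (the proof there needs $G^t$ to be a \emph{maximal} independent subset of $G^t\cup\bigcup_{i\le t} C_i$ with respect to the matroids, which fails once knapsack blocking is mixed in). The paper handles this by an explicit case split: the $O_+/O_-$ argument is invoked only in the case $|A_\rho|=0$, giving the numerator~$1$ there; when $|A_\rho|\geqslant 1$ the paper does \emph{not} attempt the matroid argument at all and relies solely on the per-knapsack computation to produce the numerator $|A_\rho|$. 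Your two pieces are correct, but you need to route the proof through this case distinction rather than claiming the first argument holds unconditionally.
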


\begin{proof}
Consider the optimal solution $O$ and set $\rho^* = \frac{2}{P+ 2k+1} f(O)$.
By submodularity, we have $d \leqslant f(O) \leqslant |\Ground| d$, so $\rho \in \sbr{ \frac{2d}{P+2k+1}, \frac{2|\Ground|d}{P+2k+1} }$,
and there is a run of Algorithm~\ref{alg:greedyFixedDensity} with $\rho$ such that $\rho^* \in [\rho, (1+\delta) \rho]$.
In the following we consider this run.

\smallskip
\noindent
{\bf Case 1} Suppose $|A_\rho| = 0 $. The key observation in this case is that since no knapsack constraints are active,
the algorithm runs as if there were only matroid constraints. Then the argument for matroid constraints can be applied.
More precisely, let
$$
    O_+ := \cbr{z \in O\setminus S_\rho :  f(z|S_\rho) \geqslant c(z)\rho + 2\epsilon}
$$
$$
    O_- := \cbr{z \in O\setminus S_\rho : z \not\in O_+ }.
$$
Note that all elements in $O_+$ are feasible.
Following the argument of Claim~\ref{cla:com} in Theorem~\ref{thm:dtgreedy}, we have
\begin{align}
    f(O_+|S_\rho) \leqslant ((1+\delta)P + \delta) f(S_\rho) + (4+2\delta) \epsilon P|S_\rho|.
\label{eqn:matroidcase1}
\end{align}
Also, by definition the marginal gain of $O_-$ is:
\begin{align}
    f(O_-|S_\rho) \leqslant k \rho + 2\epsilon |O_-| \leqslant k \rho + 2\epsilon P |S_\rho|
\label{eqn:matroidcase2}
\end{align}
where the last inequality follows from the fact that $S_\rho$ is a maximal independent subset
and $O_-$ is an independent subset of $O\cup S_\rho$, and the fact that the sizes of any two maximal independent subsets
in the intersection of $P$ matroids can differ by a factor of at most $P$.
Plugging (\ref{eqn:matroidcase1})(\ref{eqn:matroidcase2}) into $f(O) \leqslant f(O_+ |S_\rho) + f(O_- |S_\rho) + f(S_\rho)$ we obtain the bound
$$
    f(S_\rho) \geqslant \frac{f(O)}{(P+ 2 k + 1)(1+\delta)}  - \frac{(6+2\delta) \epsilon P|S_\rho| }{(P+ 1)(1+\delta)}.
$$

\smallskip
\noindent
{\bf Case 2}
Suppose $|A_\rho| > 0 $.
For any $i \in A_\rho$ (\ie, the $i$-th knapsack constraint is active), consider the step when $i$ is added to $A_\rho$.
Let $\Greedy_i = \Greedy \cap \Ground_{i*}$, and we have $c(\Greedy_i) + c(z) > 1$.
Since every element $g$ we include in $\Greedy_i$ satisfies $\widehat{f}(g|\Greedy)  \geqslant c(g) \rho$ with respect to the solution $\Greedy_i$ when $g$ is added.
Then $f(g|\Greedy) = f_i(g|\Greedy_i) \geqslant c(g) \rho - 2\epsilon$, and
we have
\begin{align}
f_i(\Greedy_i \cup \cbr{z}) & \geqslant \rho [c(\Greedy_i) + c(z)] - 2 \epsilon (|G_i| + 1) > \rho - 2 \epsilon (|G_i| + 1).\label{eqn:knapcase1}
\end{align}
Note that $\Greedy_i$ is non-empty since otherwise the knapsack constraint will not be active.
Any element in $\Greedy_i$ is selected before or at $w_{t}$, so $f_i(G_i) \geqslant w_{t} - 2\epsilon$.
Also, note that $z$ is not selected in previous thresholds before $w_{t}$, so $f_i( \cbr{z} | \Greedy_i) \leqslant (1+\delta) w_t + 2\epsilon$ and thus
\begin{align}
f_i( \cbr{z} | \Greedy_i) \leqslant (1+\delta) f_i(\Greedy_i) + 2\epsilon (2+\delta). \label{eqn:knapcase2}
\end{align}
Plugging (\ref{eqn:knapcase1})(\ref{eqn:knapcase2}) into $ f_i(\Greedy_i \cup \cbr{z}) = f_i(\Greedy_i) + f_i( \cbr{z}  | \Greedy_i)$ leads to
\begin{align*}
f_i(\Greedy_i)  \geqslant \frac{\rho}{(2+\delta)} - \frac{2\epsilon(|G_i| + 3 + \delta) }{(2+\delta)}  & \geqslant \frac{1}{2(1+2\delta)} \rho^* - \frac{2\epsilon(|G_i| + 3 + \delta) }{(2+\delta)} \\
& \geqslant \frac{f(O)}{(P+2k+1)(1+2\delta)} - 5 \epsilon |G_i|.
\end{align*}
Summing up over all $i \in A_\rho$ leads to the desired bound.
\end{proof}

Suppose item $i \in \Item$ spreads according to
the diffusion network $\Graph_i = (\Node, \Edge_i)$. Let $i^*=\argmax_{i\in\Item}|\Edge_i|$.
By setting $\epsilon=\delta/16$ in Theorem~\ref{thm:densityEnu}, we have:

\smallskip
\noindent
\textbf{Theorem~\ref{thm:infMax}. }
{\it
In Algorithm~\ref{alg:densityEnu}, there exists a $\rho$ such that
$$
    f(S_\rho) \geqslant  \frac{\max\cbr{k_a, 1} }{(2|\Item|+2) (1+3\delta)} f(O)
$$
where $k_a$ is the number of active knapsack constraints.
The expected running time is $\widetilde\Ocal\left(\frac{|\Edge_{i^*}|+|\Node|}{\delta^2}  + \frac{|\Item||\Node|}{\delta^4} \right).$
}

\end{document}